
\documentclass[sigconf]{acmart}

\copyrightyear{2021} 
\acmYear{2021} 
\setcopyright{rightsretained} 
\acmConference[KDD '21]{Proceedings of the 27th ACM SIGKDD Conference on Knowledge Discovery and Data Mining}{August 14--18, 2021}{Virtual Event, Singapore}
\acmBooktitle{Proceedings of the 27th ACM SIGKDD Conference on Knowledge Discovery and Data Mining (KDD '21), August 14--18, 2021, Virtual Event, Singapore}\acmDOI{10.1145/3447548.3467437}
\acmISBN{978-1-4503-8332-5/21/08}
\usepackage[utf8]{inputenc} 
\usepackage[T1]{fontenc}
\usepackage{url}
\usepackage{soul}
\usepackage{ifthen}
\usepackage{newcommand}
\usepackage{mathtools}
\hypersetup{
	colorlinks = true,
	linkcolor = black,
	anchorcolor = black,
	citecolor = black,
	filecolor =black,
	urlcolor = black
}

\interdisplaylinepenalty=2500 
\usepackage{enumerate}
\usepackage{amsmath} 
\usepackage[linesnumbered,boxed,commentsnumbered, ruled]{algorithm2e}

\usepackage{mathrsfs}
\usepackage{bm}
\usepackage{tikz}
\usepackage{algorithmic}
\usetikzlibrary{arrows}
\usepackage{subfigure}
\usepackage{graphicx,booktabs,multirow}
\usepackage{epstopdf}
\usepackage{subfigure}
\usepackage{multirow}
\usepackage{tabularx} 
\usepackage{booktabs}
\usepackage{threeparttable}

\newtheorem{lemma}{\textbf{Lemma}}
\newtheorem{theorem}{\textbf{Theore}m}

\newtheorem{assumption}{\textbf{Assumption}}

\newcommand{\norm}[2]{\left\| #1 \right\|_{#2}}
\newcommand{\normn}[2]{\| #1 \|_{#2}}

\newcommand{\E}{\mathbb{E}}

\hyphenation{op-tical net-works semi-conduc-tor}

\settopmatter{printacmref=true}
\begin{document}

\setcounter{page}{1}

\fancyhead{}
\title{Global Neighbor Sampling for Mixed CPU-GPU Training on Giant Graphs}
\date{ }
\author{Jialin Dong}
\authornote{The work was performed during an internship at AWS Shanghai AI Lab.}
\authornote{Both authors contributed equally to this research.}
\email{jialind@g.ucla.edu}
\affiliation{%
  \institution{University of California, Los Angeles}
  \country{USA}
}

\author{Da Zheng}\authornotemark[2]
\email{dzzhen@amazon.com}
\affiliation{%
  \institution{AWS AI}
  \country{USA}}

\author{Lin F. Yang}
 \email{linyang@ee.ucla.edu}
\authornote{Corresponding author.}

\affiliation{%
 \institution{University of California, Los Angeles}
  \country{USA}
}

\author{Geroge Karypis}\authornotemark[3]
 \email{gkarypis@amazon.com}
\affiliation{%
  \institution{AWS AI}
  \country{USA}}

\begin{abstract}

Graph neural networks (GNNs) are powerful tools for learning from graph data and are widely used in various applications such as social network recommendation, fraud detection, and graph search. The graphs in these applications are typically large, usually containing hundreds of millions of nodes. Training GNN models on such large graphs efficiently remains a big challenge.
Despite a number of sampling-based methods have been proposed to enable mini-batch training on large graphs, 
these methods have not been proved to work on truly industry-scale graphs, which require GPUs or \emph{mixed-CPU-GPU training}.
The state-of-the-art sampling-based methods are usually not optimized for these real-world hardware setups, in which data movement between CPUs and GPUs is a bottleneck.
To address this issue,
we propose Global Neighborhood Sampling that aims at training GNNs on giant graphs specifically for mixed-CPU-GPU training. 
The algorithm samples a global cache of nodes periodically for all mini-batches and stores them in GPUs. 
This global cache allows in-GPU importance sampling of mini-batches, which drastically
reduces the number of nodes in a mini-batch, especially in the input layer, to reduce data copy between CPU and GPU and mini-batch computation without compromising the training convergence rate or model accuracy.
We provide a highly efficient implementation of this method and show that our implementation
outperforms an efficient  node-wise neighbor sampling baseline by a factor of $2\times - 4\times$
on giant graphs. It outperforms an efficient implementation of LADIES with small layers
by a factor of $2\times-14\times$ while achieving much higher accuracy than LADIES.
We also theoretically analyze the proposed algorithm and show that with cached node data of a proper size, it enjoys a comparable convergence rate as the underlying node-wise sampling method.

\end{abstract}

\maketitle
\keywords{Graph Neural Networks,Neighbor Sampling,Mixed CPU-GPU Training}

\section{Introduction}\label{sec:introduction}
Many real world data come naturally in the form of graphs; e.g., social networks, recommendation,
gene expression networks, and knowledge graphs.
In recent years, Graph Neural Networks (GNNs) \cite{gcn, gat, graphsage} have been proposed to learn from such graph-structured data and have achieved outstanding performance. 
Yet, in many applications, graphs are usually large, containing hundreds of millions to billions of nodes and tens to hundreds of billions of edges.
Learning on such giant graphs is challenging due to the limited amount of memory available on a single GPU or a single machine. 
As such, mini-batch training is used to train GNN models on such giant graphs. 
However, due to the connectivities between nodes, computing the embeddings of a node
with multi-layer GNNs usually involves in many nodes in a mini-batch. This leads to
substantial computation and data movement between CPUs and GPUs in mini-batch training and makes training inefficient.


To remedy this issue, various GNN training methods
have been developed to reduce the number of nodes in a mini-batch~\cite{graphsage, zeng2019graphsaint, fastgcn, zou2019layer, liu2020bandit}. Node-wise neighbor sampling used by GraphSage \cite{graphsage}
samples a fixed number of neighbors for each node independently. Even though it reduces the number
of neighbors sampled for a mini-batch, the number of nodes in each layer still grows exponentially.
FastGCN \cite{fastgcn} and LADIES \cite{zou2019layer} sample a fixed number of
nodes in each layer, which results in isolated nodes when used on large graphs. In addition, LADIES
requires significantly more computation to sample neighbors and can potentially slow down the overall
training speed. The work by Liu et al.~\cite{liu2020bandit} tries to alleviate the neighborhood explosion and reduce
the sampling variance by applying a bandit sampler. However, this method leads to very large sampling
overhead and does not scale to large graphs. These sampling methods are usually evaluated on small to medium-size graphs.
When applying them to industry-scale graphs, they have suboptimal performance or substantial computation
overhead as we discovered in our experiments.

%
%
%
To address some of these problems and reduce training time, LazyGCN ~\cite{ramezani2020gcn} periodically samples a \emph{mega-batch} of nodes and reuses it to sample further mini-batches. By loading each mega-batch in GPU memory once, LazyGCN can mitigate data movement/preparation overheads. However, in order to match the accuracy of standard GNN training  LazyGCN requires very large mega-batches 
(cf. Figure~\ref{fig:lazygcn}), which makes it impractical for graphs with hundreds of millions of nodes.

We design an efficient and scalable sampling method that takes into account the characteristics
of training hardware into consideration. 
%
%
GPUs are the most efficient hardware for training GNN models. Due to the small GPU memory size, state-of-the-art GNN frameworks, such as DGL~\cite{wang2019dgl} and Pytorch-Geometric~\cite{pyg}, keep the graph data in CPU memory and perform mini-batch computations on GPUs when training GNN models on large graphs. We refer to this training strategy as \textit{mixed CPU-GPU training}. The main bottleneck of mixed CPU-GPU training is data copy between CPUs and GPUs (cf. Figure~\ref{fig:sampling}). Because mini-batch sampling occurs in CPU, we need a low-overhead sampling algorithm to enable efficient training.
Motivated by the hardware characteristics, we developed the \emph{Global Neighborhood Sampling} (GNS) approach that samples
a global set of nodes periodically for all mini-batches.
The sampled set is small so that we can store all of their node features in GPU memory
and we refer this set of nodes as \textit{cache}. 
%
The cache is used for neighbor sampling in a mini-batch. Instead of sampling
any neighbors
of a node, GNS gives the priorities of sampling neighbors that exist in the cache.
This is a fast way of biasing node-wise neighbor sampling to reduce the number of
distinct nodes of each mini-batch and increase the overlap between mini-batches.
When coupled with GPU cache, this method
drastically reduces the amount of data copy between GPU and CPU to speed up training.
In addition, we deploy importance sampling that reduces the sampling variance and also allows us to use a small cache size to train models.

We develop a highly optimized implementation of GNS and compare it with efficient implementations
of other sampling methods provided by DGL, including node-wise neighbor sampling and LADIES.
We show that GNS achieves state-of-the-art model accuracy while speeding up training by a factor of $2\times - 4\times$
compared with node-wise sampling and by a factor of $2\times - 14\times$ compared with LADIES.



The main contributions of the work are described below:
\begin{enumerate}
\item We analyze the existing sampling methods and demonstrate their main drawbacks on large graphs.
\item We develop an efficient and scalable sampling algorithm that addresses the main overhead in
mixed CPU-GPU mini-batch training and show a substantial training speedup compared with efficient implementations
of other training methods.
\item We demonstrate that this sampling algorithm can train GNN models on graphs with over 111 millions of nodes and 1.6 billions of edges.
\end{enumerate}

\section{Background}\label{sec:method}
In this section, we review GNNs and several state-of-the-art sampling-based training algorithms,
including node-wise neighbor sampling methods and layer-wise importance sampling methods.
The fundamental concepts of mixed-CPU-GPU training architecture is introduced. We discuss the limitations of state-of-the-art sampling methods in mixed-CPU-GPU training.

\subsection{Existing GNN Training Algorithms}
\textbf{Full-batch GNN}
Given a graph $\mathcal{G(\mathcal{V},\mathcal{E})}$,
the input feature of node $v\in\mathcal{V}$ is denoted as $\mathbf{h}_v^{(0)}$, and the feature of the edge
between node $v$ and $u$ is represented as $\mathbf{w}_{uv}$.
The representation of node $v\in\mathcal{V}$ at layer $\ell$ can be derived from a GNN model given by:


\begin{equation}\label{eq:mp-vertex}
\mathbf{h}_v^{\ell} = g(\mathbf{h}_v^{\ell-1},\bigcup_{u\in\mathcal{N}(v)} f(\mathbf{h}_u^{\ell-1}, \mathbf{h}_v^{\ell-1}, \mathbf{w}_{uv})),
\end{equation}
where $f$, $\bigcup$ and $g$ are pre-defined or parameterized functions for computing feature data, aggregating data information, and
updating node representations, respectively. For instance, in GraphSage training \cite{graphsage}, the candidate aggregator functions include mean aggregator \cite{gcn}, LSTM aggregator \cite{hochreiter1997long}, and max pooling aggregator \cite{qi2017pointnet}. The function $g$ is set as nonlinear activation function.

Given training dataset $\{(\xb_i,y_i)\}_{v_i\in\cV_s}$, the parameterized functions will be learned by minimizing the loss function:
 \begin{equation}
\cL = \frac{1}{|\cV_s|}\sum_{v_i\in \cV_s} \ell(y_i,\zb_i^{L}),
 \end{equation}
where $\ell(\cdot,\cdot)$ is a loss function, $\zb_i^{L}$ is the output of GNN with respect to the node $v_i\in\cV_s$ where $\cV_S$ represents the set of training nodes. For full-batch optimization, the loss function is optimized by
gradient descent algorithm where the gradient with respect to each node $v_i\in \cV_S$ is computed as $\frac{1}{|\cV_s|}\sum_{v_i\in \cV_S} \nabla\ell(y_i,\zb_i^{(L)})$. During the training process, full-batch GNN requires to store and aggregate representations of all nodes across all layers. 
The expensive computation time and memory costs prohibit full-batch GNN from handling large graphs. 
Additionally, the convergence rate of full-batch GNN is slow because model parameters are updated only once at each epoch.

\noindent
\textbf{Mini-batch GNN}
To address this issue, a mini-batch training scheme has been developed which optimizes via
mini-batch stochastic gradient descent $\frac{1}{|\cV_B|}\sum_{v_i\in \cV_B} \nabla \ell(y_i,\zb_i^L)$
where $\cV_B\in\cV_S$. These methods first uniformly sample a set of nodes from the training set, known as
\textit{target nodes}, and sample neighbors of these target nodes to form a mini-batch.
The focus of the mini-batch training methods is to reduce the number of
neighbor nodes for aggregation via various sampling strategies to reduce the memory and
computational cost. The state-of-the-art sampling algorithm is discussed in the sequel.

\noindent
\textbf{Node-wise Neighbor Sampling Algorithms.}
Hamilton et al.~\cite{graphsage} proposed an unbiased sampling method to reduce the number of
 neighbors for aggregation via neighbor sampling.  It randomly selects
at most $s_{\text{node}}$ (defined as \textit{fan-out} parameter) neighborhood nodes for every target node;
followed by computing the representations of target nodes via
aggregating feature data from the sampled neighborhood nodes. Based on the notations in (\ref{eq:mp-vertex}), the representation of node $v\in\mathcal{V}$ at layer $\ell$ can be described as follows: 
\begin{equation}
\mathbf{h}_v^{\ell} = g\bigg(\mathbf{h}_v^{\ell-1},\bigcup_{u\in\mathcal{N}_{\ell}(v)} f\big(\frac{1}{s_{\text{node}}}\mathbf{h}_u^{\ell-1}, \mathbf{h}_v^{\ell-1}\big)\bigg),
\end{equation}
where $\mathcal{N}_{\ell}(v)$ is the sampled neighborhood nodes set at $\ell$-th layer such that $|\mathcal{N}_{\ell}(v)| =s_{\text{node}} $.
The neighbor sampling procedure is repeated recursively on target nodes and their sampled neighbors when dealing with multiple-layer GNN. 
Even though node-wise neighbor sampling scheme addresses the memory issue of GNN, there exists excessive computation under this scheme because the scheme still results in exponential growth of neighbor nodes with the number of layers. This yields a large volume of data movement between CPU and GPU for mixed CPU-GPU training.


\noindent
\textbf{Layer-wise Importance Sampling Algorithms.}
 To address the scalability issue, Chen et al.~\cite{fastgcn} proposed an advanced layer-wise method called FastGCN. Compared with node-wise sampling method, it yields extra variance when sampling a fixed number of nodes for each layer. To address the variance issue, it performs degree-based importance sampling on each layer. The representation of node $v\in\mathcal{V}$ at layer $\ell$ of FastGCN model is described as follows:
\begin{equation}
\mathbf{h}_v^{\ell} = g\bigg(\mathbf{h}_v^{\ell-1},\bigcup_{u\in q(v)} f\big(\frac{1}{s_{\text{layer}}}\mathbf{h}_u^{\ell-1}/{ q_u}, \mathbf{h}_v^{\ell-1}\big)\bigg),
\end{equation}
where the sample size denotes as $s_{\text{layer}}$, $q(v)$ is the distribution over $v\in\mathcal{V}$ and $q_u$
is the probability assigned to node $u$.
A major limitation is that FastGCN performs sampling on every layer independently, which yields approximate embeddings with large variance. Moreover, the subgraph sampled by FastGCN is not representative to the original graph. This leads to
poor performance and the number of sampled nodes required to guarantee convergence during training process is large.


The work by Zhou et al.~\cite{zou2019layer} proposed a sampling algorithm known as LAyer-Dependent
ImportancE Sampling (LADIES) to address the limitation of FastGCN and exploit the connection
between different layers.
Specifically, at $\ell$-th layer, LADIES samples nodes reachable from the nodes in the previous layer. However, this method \cite{zou2019layer} comes with cost.
To ensure the node connectivity between layers, the method needs to
extract and merge the entire neighborhood of all nodes in the previous layer and compute the sampling probability
for all candidate nodes in the next layer. Thus, this sampling method has significantly higher computation
overhead. Furthermore, when applying this method on a large graph, it still constructs a mini-batch with
many isolated nodes, especially for nodes in the first layer (Table \ref{tab:ladies}).

\noindent
\textbf{LazyGCN.}
Even though layer-wise sampling methods effectively address the neighborhood explosion issue, they failed to investigate computational overheads in preprocessing data and loading fresh samples during training. Ramezan et al. \cite{ramezani2020gcn} proposed a framework called LazyGCN which decouples the frequency of sampling from the sampling strategy. It periodically samples \textit{mega-batches} and effectively reuses \textit{mega-batches} to generate mini-batches and alleviate the preprocessing overhead.
There are some limitations in the LazyGCN setting. First, this method requires large mini-batches to guarantee model accuracy. For example, their experiments on Yelp and Amazon dataset use the batch size
of 65,536. This batch size is close to the entire training set, yielding overwhelming overhead
in a single mini-batch computation. Its performance
deteriorates for smaller batch sizes even with sufficient epochs (Figure \ref{fig:lazygcn}). Second, their evaluation is based on
inefficient implementations with very large sampling overhead. In practice, the sampling computation overhead
is relatively low in the entire mini-batch computation (Figure \ref{fig:sampling}) when using
proper development tools.

Even though both GNS and LazyGCN cache data in GPU to accelerate computation
in mixed CPU-GPU training, they use very different strategies for caching. LazyGCN caches
the entire graph structure of multiple mini-batches sampled by node-wise neighbor sampling or layer-wise
sampling and suffers from the problems in these two algorithms. Due to the exponential growth
of the neighborhood size in node-wise neighbor sampling, LazyGCN cannot store a very large mega-batch
in GPU and can generate few mini-batches from the mega-batch. Our experiments show that LazyGCN
runs out of GPU memory even with a small mega-batch size and mini-batch size on large graphs
(OAG-paper and OGBN-papers100M in Table \ref{tab: exp-dataset}). Layer-wise sampling may result in many 
isolated nodes in a mini-batch. In addition, LazyGCN uses the same sampled graph structure when 
generating mini-batches from mega-batches, this potentially leads to overfit. In contrast, GNS cache 
nodes and use the cache to reduce the number of nodes in a mini-batch; GNS always sample a 
different graph structure for each mini-batch and thus it is less likely to overfit.

\subsection{Mixed CPU-GPU training}
Due to limited GPU memory, state-of-the art GNN framework (e.g., DGL~\cite{wang2019dgl}  and Pytorch Geometric~\cite{pyg}) train GNN models on large graph data by storing the whole graph data in CPU memory and performing mini-batch computation on GPUs. This allows users to take advantage of large CPU memory and use GPUs to accelerate GNN training. In addition, mixed CPU-GPU training makes it easy to scale GNN training to multiple GPUs or multiple machines~\cite{zheng2020distdgl}. 
%

Mixed CPU-GPU training strategy usually involves six steps: 
\textit{1)} sample a mini-batch from the full graph, 
\textit{2)} slice the node and edge data involved in the mini-batch from the full graph,
\textit{3)} copy the above sliced data to GPU, 
\textit{4)} perform forward computation on the mini-batch, 
\textit{5)} perform backward propagation, and
\textit{6)} run the optimizer and update model parameters.
Steps~1--2 are done by the CPU, whereas steps~4--6 are done by the GPU. 

We benchmark the mini-batch training of GraphSage~\cite{graphsage} models with node-wise neighbor sampling provided by DGL, which provides very efficient neighbor sampling implementation and graph kernel computation for GraphSage. 
Figure~\ref{fig:sampling} shows the breakdown of the time required to train GraphSage on the OGBN-products graph and the OAG-paper graph (see Table~\ref{tab: exp-dataset} for dataset information). Even though sampling happens in CPU, its computation accounts for 10\% or less with sufficient optimization and parallelization. However, the speed of copying node data in CPU (step~2) is limited by the CPU memory bandwidth and moving data to GPU (step~3) is limited by the PCIe bandwidth. Data copying accounts for most of the time required by mini-batch training. 
For example, the training spends ~60\% and ~80\% of the per mini-batch time in copying data from CPU to GPU on OGBN-products and OAG-paper, respectively.
The training on OAG-paper takes significantly more time on data copy because OAG-paper has 768-dimensional BERT embeddings~\cite{bert} as node features, whereas OGBN-products uses 100-dimensional node features.
%

\begin{figure}
\centering
\includegraphics[width=0.65\linewidth]{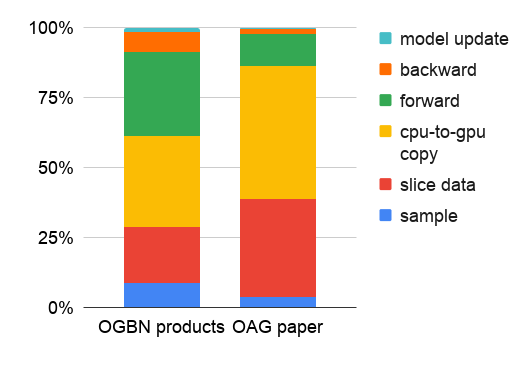}
\caption{Runtime breakdown (\%) of each component in mini-batch training for
an efficient GraphSage implementation in DGL.}
\label{fig:sampling}
\end{figure}


These results show that when the different components of mini-batch training are highly-optimized, the main bottleneck of mixed CPU-GPU training is data copy (both data copy in CPU and between CPU and GPUs). To speed up training, it is essential to reduce the overhead of data copy, without significantly increasing the overhead of other steps.

\section{Global Neighbor Sampling (GNS)}
To overcome the drawbacks of the existing sampling algorithms and tackle the unique problems in mixed CPU-GPU training, we developed a new sampling approach, called \emph{Global Neighborhood Sampling} (GNS), that has low computational overhead and reduces the number of nodes in a mini-batch without compromising the model accuracy and convergence rate. Like node-wise and layer-wise sampling, GNS uses mini-batch training to approximate the full-batch GNN training on giant graphs. 
\begin{table*}[t]
	\caption{Summary of Notations and Definitions} 
	\label{tab:notation}
	\centering
	\small
	\begin{tabular}{l|p{10cm}}
		\toprule
		$\cG = (\cV,\mathcal{E})$ & $\cG$ denotes the graph consist of set of $|\mathcal{V}|$ nodes and $| \mathcal{E}$ | edges.\\ \hline
		$L,K$ & $L$ is the total number of layers in GCN, and  $K$ is the dimension of embedding vectors (for simplicity, assume it is the same across all layers).\\ \hline
		$b,s_{node},s_{layer}$ & For batch-wise sampling, $b$ denotes the batch size, $s_{node}$ is the number of sampled neighbors per node for node-wise sampling, and $s_{layer}$ is number of sampled nodes per layer for layer-wise sampling. \\ \hline
		$\mathcal{N}(v)$ & Denotes the set of neighbors of node $v\in\mathcal{V}$.\\ \hline
		$\mathcal{N}_{\ell}(v)$ & Denotes the set of sampled neighbors of node $v\in\mathcal{V}$ at $\ell$-th layer.\\ \hline
		$\mathcal{N}_{\ell}^{\text{u}}(v)$ & Denotes the set of neighbors of node $v\in\mathcal{V}$ at $\ell$-th layer sampled uniformly at random.\\ \hline
		$\mathcal{N}_{C}(v)$ & Denotes the set of neighbors of node $v\in\mathcal{V}$ in the cache. \\ \hline
		$\mathcal{C},p^{\text{cache}}_{v}$ & Denotes the set of  cached nodes which are sampled from $\mathcal{V}$ corresponding to the probability of  $p^{\text{cache}}_{v}$ for $v\in\mathcal{V}$.\\ \hline
		 $p^{(\ell)}_v $& Denotes importance sampling coefficients with respect to the node $v\in\mathcal{V}$ at $\ell$-th layer in Algorithm \ref{alg:full}.\\ \hline
		 $\cV_S$, $|\cV_S|$ & Denotes the training set and the size of the training set. \\ \hline
		 $\text{target node}$& The node in the mini-batch where the mini-batch is sampled at random from the training node set. \\ \hline
		 $\cV_B$, $|\cV_B|$ & Denotes the set of target nodes and the number of target nodes in a mini-batch.
		\\ \bottomrule
	\end{tabular}
\end{table*}

\subsection{Overview of GNS}

Instead of sampling neighbors independently like node-wise neighbor sampling, GNS periodically samples
a global set of nodes following a probability distribution $\mathcal{P}$ to assist in neighbor sampling.
$\mathcal{P}_i$ defines the probability of node $i$ in the graph being sampled and placed in the set.
Because GNS only samples a small number of nodes to form the set, we can copy all of the node features
in the set to GPUs. Thus, we refer to the set of nodes as \textit{node cache} $\mathcal{C}$.
When sampling neighbors of a node, GNS prioritizes the sampled neighbors from the cache and
samples additional neighbors outside the cache only if the cache does not provide sufficient
neighbors.


Because the nodes in the cache are sampled, we can compute the node sampling probability from
the probability of a node appearing in the cache, i.e., $\mathcal{P}$. 
%
%
We rescale neighbor embeddings by importance sampling coefficients $p_{u}^{(\ell)}$ from  $\mathcal{P}$ in the mini-batch forward propagation
so that the expectation of the aggregation of sampled neighbors is the same as the aggregation of
the full neighborhood.
\begin{align}
  \mathbb{E}\left(\sum_{u \in \mathcal{N}_{\ell}(v)} p_u^{(\ell)} * h_u^{\ell}\right) = \sum_{u \in \mathcal{N}(v)} h_u^\ell
\end{align}

\noindent Algorithm~\ref{alg:full} illustrates the entire training process.
%
%

In the remaining sections, we first discuss the cache sampling in Section~\ref{sec:cache}.
We explain the sampling procedure in Section~\ref{sec:sampling}.
To reduce the variance in GNS, an importance sampling scheme is further developed in Section~\ref{sec: IS}. We then establish the convergence rate of GNS which is inspired by the paper \cite{ramezani2020gcn}. It shows that under mild assumption, GNS enjoys comparable convergence rate as underlying node-wise sampling in training, which is demonstrated in Section~\ref{analysis}. The notations and definition used in the following is summarized in Table \ref{tab:notation}.

\begin{algorithm}[tb] 
	\caption{Minibatch Training with Global Neighbor sampling for GNN on Node Classification}
	\label{alg:full}
	\SetKwInOut{Input}{Input}\SetKwInOut{Output}{Output}
	\Input{Graph $\mathcal{G(\mathcal{V},\mathcal{E})}$;\\
		list of target nodes of minibatches $\{ \mathcal{B}_1,\cdots, \mathcal{B}_M  \}$;\\
		input features $\{\mathbf{x}_v, \forall v\in \mathcal{V}\}$; \\
		 number of epochs $T$;\\
		depth $L$; weight matrices $\mathbf{W}^{\ell}, \forall \ell \in \{1,...,L\}$; \\
		cache sampling probability $\mathcal{P}$; \\
		nonlinear activation function $g$; \\
		differentiable aggregator functions $f_{\ell}, \forall \ell \in \{1,...,L\}$.\\}

	\Output{Vector representations $\mathbf{z}_v$ for all $v \in \mathcal{B}$}
	\begin{algorithmic}[1] 
		\FOR {$t=0$ to $T$}
		\STATE $\mathcal{C} \leftarrow sample\_cache(\mathcal{V}, \mathcal{P}, \{ \mathcal{B}_1,\cdots, \mathcal{B}_M  \})$
		\FOR {$\mathcal{B}\in \{ \mathcal{B}_1,\cdots, \mathcal{B}_M  \}$  }
		\STATE $\mathcal{B}^L \leftarrow \mathcal{B}$
		\FOR {$\ell=L...1$}\label{line:for}
		\STATE $B^{\ell-1} \leftarrow \{\}$ 
		\FOR{$u \in \mathcal{B}^{\ell}$}\label{line:re}
		\STATE $\mathcal{N}_{\ell}(u), \mathcal{P}_{\ell}(u) \leftarrow sample(\mathcal{N}(u), C)$
		\STATE $\mathcal{B}^{\ell-1} \leftarrow \mathcal{B}^{\ell-1} \cup \mathcal{N}_{\ell}(u)$;
		\STATE $\mathcal{P}^{\ell-1} \leftarrow \mathcal{P}^{\ell-1} \cup \mathcal{P}_{\ell}(u)$
		\ENDFOR\label{line:rend}
		\ENDFOR\label{line:endfor}
		
		\STATE $\mathbf{h}^0_u \leftarrow \mathbf{x}_v, \forall v \in \mathcal{B}^0$
		\FOR{$\ell =1...L$}\label{line:for1}
		\FOR{$u \in \mathcal{B}^{\ell}$}
		\STATE Compute importance sampling coefficients $p^{(\ell-1)}_{u'}$ for $\forall u' \in \mathcal{N}_{\ell}(u)\}$ 
		\STATE $\mathbf{h}^{\ell}_{\mathcal{N}(u)} \leftarrow f_{\ell}(\{p^{(\ell-1)}_{u'}\mathbf{h}_{u'}^{\ell-1},
\forall u' \in \mathcal{N}_{\ell}(u)\})$
		\STATE $\mathbf{h}^{\ell}_u \leftarrow g\left(\mathbf{W}^{\ell}\cdot (\mathbf{h}_u^{\ell-1}, \mathbf{h}^{\ell}_{\mathcal{N}(u)})\right)$
		\ENDFOR
		\ENDFOR\label{line:endfor1}
		\ENDFOR
		\ENDFOR
	\end{algorithmic} 
\end{algorithm}

\subsection{Sample Cache}\label{sec:cache}
 

GNS periodically constructs a cache of nodes $\mathcal{C}$ to facilitate neighbor sampling in mini-batch construction. GNS uses a biased sampling approach to select a set of nodes $C$ that, with high probability, can be reached from nodes in the training set. 
The features of the nodes in the cache are loaded into GPUs beforehand.

Ideally, the cache needs to meet two requirements: 1) in order to keep the entire cache in the GPU memory,
the cache has to be sufficiently small; 2) in order to have sampled neighbors to come from the cache,
the nodes in the cache have to be reachable from the nodes in the training set with a high probability.
%
%

Potentially, we can uniformly sample nodes to form the cache, which may require a large number of nodes
to meet requirement 2.
Therefore, we deploy two approaches to define the sampling probability for the cache.
If majority of the nodes in a graph are in the training set, we define the sampling
probability based on node degree. For node $i$, the probability of being sampled in the cache is given by
\begin{align}\label{sp}
  p_{i} =   \mathrm{deg}(i)/\sum_{k\in\mathcal{V} } \mathrm{deg}(k).
\end{align}

\noindent For a power-law graph, we only need to maintain a small cache of nodes to cover majority of the nodes in the graph.

If the training set only accounts for a small portion of the nodes
in the graph, we use short random walks to compute the sampling probability. Define $\mathcal{N}_{\ell}(v)$ as the number of sampled neighbor nodes corresponding to node $v\in\mathcal{V}$ in each layer,
\begin{align}
\mathbf{d} = [\mathcal{N}_{\ell}(v_1)/\mathrm{deg}(v_1),\cdots,\mathcal{N}_{\ell}(v_{|\mathcal{V|}})/\mathrm{deg}(v_{|\mathcal{V|}})]^{\top},
\end{align}
The node sampling probability $\bm{P}^{\ell}\in\mathbb{R}^{|\mathcal{V}|}$ for the $\ell$-th layer is represented as
\begin{align}
\bm{P}^{\ell} = (\mathbf{D}\mathbf{A } + \mathbf{I})\bm{P}^{\ell-1},
\end{align}
where $\mathbf{A}$ is the adjacency matrix and
$\mathbf{D} = \mathrm{diag}(\mathbf{d})$.
$\bm{P}^0$ is 
\begin{align}
p_i^0= \left\{
\begin{array}{ll}
      \frac{1}{|\cV_S|}, & \text{ if } i \in \cV_S \\
      0,                       & \text{ otherwise. }\\
\end{array}
\right. 
\end{align}

The sampling probability for the cache is set as $\bm{P}^L$, where $L$ is the number of layers in the multi-layer GNN model.


As the experiments will later show (cf. Section~\ref{sec:simulation}), the size of the cache $\mathcal{C}$ can be as small as $1\%$ of the number of nodes ($|\mathcal{V}|$) without compromising the model accuracy and convergence rate.

\subsection{Sample Neighbors with Cache}\label{sec:sampling}
When sampling $k$ neighbors for a node, GNS first restricts sampled neighbor nodes from the cache $\mathcal{C}$.
If the number of neighbors sampled from the cache is less than $k$, it samples remaining neighbors uniformly at random
from its own neighborhood.
%
%

A simple way of sampling neighbors from the cache is to compute the overlap of the neighbor list
of a node with the nodes in the cache. Assuming one lookup in the cache has $O(1)$ complexity,
this algorithm will result in $O(|\mathcal{E}|)$ complexity, where $|\mathcal{E}|$ is the number
of edges in the graph. However, this complexity is significantly larger than the original node-wise
neighbor sampling $O(\sum_{i \in \mathcal{V}_B} \min(k, |\mathcal{N}(i)|))$ in a power-law graph,
where $\mathcal{V}_B$ is the set of target nodes in a mini-batch and $|\mathcal{N}(i)|))$ is
the number of neighbors of node $i$.
Instead, we construct an induced subgraph $\mathcal{S}$ that contains the nodes in the cache and their neighbor nodes. This is done once, right after we sample nodes in the cache.  For an undirected graph, this subgraph contains the neighbors of all nodes that reside in the cache.  During neighbor sampling, we can get the cached neighbors of node $i$ by reading the neighborhood  $\mathcal{N}_S(i)$ of node $i$ in the subgraph. Constructing the subgraph $\mathcal{S}$ is much more lightweight, usually $\ll O(|\mathcal{E}|)$.

We parallelize the sampling computations with multiprocessing. That is, we create a set of processes
to sample mini-batches independently and send them back to the trainer process for mini-batch
computation. The construction of subgraphs $\mathcal{S}$ for multiple caches
$\mathcal{C}$ can be parallelized.

\subsection{Importance Sampling Coefficient} 
\label{sec: IS}

When nodes are sampled from the cache, it yields substantial variance compare to the uniform sampling method~\cite{graphsage}.
To address this issue, we aim to assign importance weights to the nodes, thereby rescaling the neighbor features to approximate the expectation of the uniform sampling method. 
To achieve this, we develop an importance sampling scheme that aggregates the neighboring feature vectors with corresponding importance sampling coefficient, i.e.,
\begin{align}
\mathbf{h}^{\ell}_{\mathcal{N}(u)} \leftarrow f_{\ell}(\{1/p^{(\ell-1)}_{u'}\cdot\mathbf{h}_{u'}^{\ell-1},
\forall u' \in \mathcal{N}_{\ell}(u)\}).
\end{align}
To establish the importance sampling coefficient, we begin with computing the probability of the sampled node $u'\in \mathcal{N}_{\ell}(u)$ being contained in the cache, given by
\begin{align}
p^{\mathcal{C}}_{u'} = 1-(1-p_{u'})^{|\mathcal{C}|},
\end{align}
where the sampling probability $p_{u'}$ refers to (\ref{sp}) and $|\mathcal{C}|$ denotes the size of cache set. Recall that the number of sampled nodes at $\ell$-layer, i.e., $k$, and $\mathcal{N}_C(i)$ in (\ref{pr_j}), the importance sampling coefficient $p^{(\ell-1)}_{u'}$ can be represented as
\begin{align}
p^{(\ell-1)}_{u'} = p^{\mathcal{C}}_{u'}\frac{k}{\min\{k,\mathcal{N}_C(i)\}}.
\end{align}

\subsection{Theoretical Analysis}\label{analysis}
In this section, we establish the convergence rate of GNS which is inspired by the work of Ramezani et al.~\cite{ramezani2020gcn}. It shows that under mild assumption, GNS enjoys comparable convergence rate as underlying node-wise sampling in training. Here, the convergence rate of GNS mainly depends on the graph degree and the size of cached set.
We focus on a two-layer GCN for simplicity and denote the loss functions of full-batch, mini-batch and proposed GNS as

\begin{align}
J(\boldsymbol{\theta}) &=\frac{1}{N} \sum_{i \in \mathcal{V}} f_{i}\left(\frac{1}{|\mathcal{N}(i)|} \sum_{j \in \mathcal{N}(i)} \frac{1}{|\mathcal{N}(j)|}\sum_{k \in \mathcal{N}(j)}g_{jk}(\boldsymbol{\theta})\right) \\
J_{\mathcal{B}}(\boldsymbol{\theta}) &=\frac{1}{B} \sum_{i \in \mathcal{V}_{\mathcal{B}}} f_{i}\left(\frac{1}{|\mathcal{N}(i)|} \sum_{j \in \mathcal{N}(i)} \frac{1}{|\mathcal{N}(j)|}\sum_{k \in \mathcal{N}(j)}g_{jk}(\boldsymbol{\theta})\right) \\
&\widetilde{J}_{\mathcal{B}}(\boldsymbol{\theta})=\notag \\\frac{1}{B} \sum_{i \in \mathcal{V}_{\mathcal{B}}}&f_{i}\bigg(\frac{1}{|\mathcal{N}^{\text{u}}_2(i)\cap \mathcal{C}|} \sum_{j \in \mathcal{N}^{\text{u}}_2(i)\cap\mathcal{C}} \frac{1}{|\mathcal{N}^{\text{u}}_1(j)\cap\mathcal{C}|}\notag\\&\sum_{k \in \mathcal{N}^{\text{u}}_1(j)\cap\mathcal{C}}p_k^{(1)}g_{jk}( \boldsymbol{\theta})\bigg),
\end{align}
respectively, where the outer and inner layer function are defined as $f(\cdot) \in \mathbb{R}$ and $g(\cdot) \in \mathbb{R}^{n},$ and their gradients as $\nabla f(\cdot) \in \mathbb{R}^{n}$ and $\nabla g(\cdot) \in \mathbb{R}^{n \times n}$, respectively. Specifically, the function $g_{jk}(\cdot)$ depends on the nodes contained in two layers. For simplicity, we denote $\mathcal{N}
^{\mathcal{C}}(j):=\mathcal{N}^{\text{u}}_1(j)\cap\mathcal{C}$. We denote $|\mathcal{N}(i)| = N^i, |\mathcal{N}^{\text{u}}_{\ell}(i)| = N^i_{\ell},|\mathcal{N}
^{\mathcal{C}}(j)| =N_{\mathcal{C}}^{j}  $ in the following.

The following assumption gives the Lipschitz continuous constant of the gradient of composite function $J(\theta)$, which plays a vital role in theoretical analysis.

\begin{assumption}\label{ass:lp}
	Suppose $f(\cdot)$ is $L_{f}$-Lipschitz continuous, $g(\cdot)$ is $L_{g}$-Lipschitz continuous, $\nabla f(\cdot)$ is $L'_{f}$-Lipschitz continuous, $\nabla g(\cdot)$ is $L'_{g}$-Lipschitz continuous.
\end{assumption}

\begin{theorem}\label{theorem1}
	Denote  $N^i_{\ell}$ as the number of the neighborhood nodes with respect to $i\in\mathcal{V}$ sampling uniformly at random at $\ell$-th layer.  The cached nodes in the set $\mathcal{C}$ with the size of $ |\mathcal{C}|$ are sampled without replacement according to $p^{\text{cache}}_{v}$. The dimension of node feature is denoted as $n$ and the size of  mini-batch is denoted as $B$. Define  $\widetilde{C} = |\mathcal{C}|/|\mathcal{V}|$ and  $C_d = \sum_{v_i\in\mathcal{V} } \mathrm{deg}(v_i)/|\mathcal{V}|$ with the constant $c>0$. Under Assumption \ref{ass:lp}, with probability exceeding $1-\delta$, GNS optimized by stochastic gradient descent can achieve
	\begin{align}
	\mathbb{E}\left[\|\nabla J(\hat{\boldsymbol{\theta}})\|^{2}\right] \leq \mathcal{O}\left(\sqrt{\frac{\mathrm{MSE}}{t}}\right),
	\end{align}
	where $\hat{\boldsymbol{\theta}}=\min _{t} \mathbb{E}\left[\left\|\nabla J\left(\boldsymbol{\theta}_{t}\right)\right\|\right]$ with $\boldsymbol{\theta}_t = \{\mathbf{W}^{\ell}_t\}_{\ell=1}^L$ and 
	\begin{align}
	\mathrm{MSE} 
	\leq&\mathcal{O}\left( L^{'2}_{f}  \frac{\log (4 n / \delta)+1 / 2}{B}\right)+\mathcal{O}\left( L_f^ {'2} L_g^ {4}  \frac{\log (4 n / \delta)+1 / 2}{c\widetilde{C} C_dN_1^jN_2^i} \right)\notag\\
	&+\mathcal{O}\left(   L_g^ {'2}  L_f^ {2}   \frac{\log (4 n / \delta) }{c\widetilde{C} C_dN_1^jN_2^i} \right).
	\end{align}
\end{theorem}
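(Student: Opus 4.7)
The plan is to reduce the theorem to a variance/MSE bound on the stochastic gradient estimator produced by GNS and then plug into a standard non-convex SGD convergence lemma. Concretely, let $\tilde{g}_t = \nabla \widetilde{J}_{\mathcal{B}}(\boldsymbol{\theta}_t)$ be the GNS gradient used in step $t$. Under Assumption~\ref{ass:lp}, the composite objective $J$ has a Lipschitz gradient (the Lipschitz constant is a polynomial of $L_f, L_g, L_f', L_g'$ via the chain rule). Provided $\mathbb{E}[\tilde{g}_t \mid \boldsymbol{\theta}_t] = \nabla J(\boldsymbol{\theta}_t)$ up to a controllable bias and $\mathbb{E}\|\tilde{g}_t - \nabla J(\boldsymbol{\theta}_t)\|^2 \leq \mathrm{MSE}$, a standard argument (summing the descent lemma and telescoping) with step size $\eta_t = \Theta(1/\sqrt{t \cdot \mathrm{MSE}})$ yields $\mathbb{E}\|\nabla J(\hat{\boldsymbol{\theta}})\|^2 = \mathcal{O}(\sqrt{\mathrm{MSE}/t})$. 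So the work reduces to proving the three-term MSE bound.

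To bound the MSE, I would peel the three independent sources of randomness in $\widetilde{J}_{\mathcal{B}}$ one at a time and use the law of total variance. First, condition on both the cache $\mathcal{C}$ and all neighbor samples, leaving only mini-batch randomness: the average over $\mathcal{V}_{\mathcal{B}}$ is a sampling-without-replacement estimator of the average over $\mathcal{V}$, which by a vector Hoeffding/Bernstein inequality contributes the $\mathcal{O}\bigl(L_f^{\prime 2}\,(\log(4n/\delta)+1/2)/B\bigr)$ term. Second, freeze $\mathcal{C}$ and $\mathcal{V}_{\mathcal{B}}$ and look at the outer neighbor sample $\mathcal{N}_2^{\mathrm{u}}(i)\cap \mathcal{C}$: given that $\mathcal{C}$ is sampled from $\mathcal{V}$ with probabilities $p_v^{\mathrm{cache}}$ yielding $\mathbb{E}|\mathcal{C}|/|\mathcal{V}| = \widetilde{C}$, the expected intersection size per neighborhood is of order $c\,\widetilde{C}\, C_d \,N_2^i$. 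A Bernstein bound for this sampled average of bounded vectors, together with the Lipschitz bound $\|g\|\leq L_g$ on the inner function's values, gives the second term $\mathcal{O}\bigl(L_f^{\prime 2} L_g^{4}(\log(4n/\delta)+1/2)/(c\widetilde{C}C_d N_1^j N_2^i)\bigr)$. Third, handle the inner neighbor sample $\mathcal{N}_1^{\mathrm{u}}(j)\cap \mathcal{C}$: the importance-sampling coefficient $p_k^{(1)}$ was designed precisely to make this an unbiased estimator of the inner aggregation, so only its variance matters, and another Bernstein inequality controlled by $L_g'^2$ together with the $L_f^2$ factor (arising when the outer gradient $\nabla f$ multiplies the inner Jacobian error via the chain rule) produces the third term.

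The glue between steps is the bias/variance decomposition for composite gradients: writing $\nabla \widetilde{J}_{\mathcal{B}} - \nabla J$ as a telescoping sum over the three conditioning levels lets one bound $\|\nabla \widetilde{J}_{\mathcal{B}} - \nabla J\|^2$ by $3$ times the sum of three conditional MSEs (via $\|a+b+c\|^2 \leq 3(\|a\|^2+\|b\|^2+\|c\|^2)$). At each level the chain rule gives an expression of the form $\nabla f(\bar{u}) \nabla g(\bar{v}) - \nabla f(\hat{u}) \nabla g(\hat{v})$ which Assumption~\ref{ass:lp} converts into either (i) the error of a stochastic average of bounded vectors (giving the $\log(4n/\delta)$ factors from a vector Hoeffding/Bernstein tail bound over $n$ coordinates at confidence $\delta$) or (ii) a deterministic Lipschitz propagation factor.

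The main obstacle will be the second step: carefully bounding the variance contribution of the cache-intersected neighbor sets. The quantities $|\mathcal{N}_\ell^{\mathrm{u}}(i)\cap\mathcal{C}|$ are themselves random, so I would work on the high-probability event (with probability $\geq 1-\delta$) that all intersection sizes concentrate around their means of order $c\,\widetilde{C}\,C_d\, N_\ell^i$ using a multiplicative Chernoff bound for sampling without replacement, and absorb the failure event into the $\log(1/\delta)$ factor that already appears in the Hoeffding tails. Propagating this high-probability lower bound on $|\mathcal{N}^{\mathcal{C}}(j)|$ through the importance weights $p_k^{(1)}$ (which rescale by the reciprocal of this size) is the delicate part, but once it is done the three terms in the MSE bound assemble cleanly, and plugging into the SGD convergence lemma yields the claimed rate.
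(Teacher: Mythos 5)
Your skeleton matches the paper's: reduce the theorem to a bound on $\mathrm{MSE}=\frac{1}{T}\sum_{t}\E[\|\nabla\widetilde{J}_{\mathcal{B}}(\boldsymbol{\theta}_t)-\nabla J(\boldsymbol{\theta}_t)\|^2]$ via a standard non-convex SGD argument (the paper invokes an extension of Lemma~4 of Ramezani et al.\ rather than rederiving the descent lemma), use the unbiasedness built into $p_k^{(1)}$, apply vector Bernstein bounds that produce the $\log(4n/\delta)$ factors, and approximate the cache-intersection sizes by $c\widetilde{C}C_d N_\ell$. Where you genuinely differ is the decomposition. The paper splits the MSE into only \emph{two} terms via $\|a+b\|^2\le 2\|a\|^2+2\|b\|^2$: the mini-batch term $\E[\|\nabla J_{\mathcal{B}}-\nabla J\|^2]$ (its Lemma~3, giving the $1/B$ term) and the cache/neighbor-sampling term $\E[\|\nabla\widetilde{J}_{\mathcal{B}}-\nabla J_{\mathcal{B}}\|^2]$ (its Lemma~2). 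The latter is handled not by peeling layers but through the product structure of the composite gradient: writing $\nabla J_{\mathcal{B}}=\frac{1}{B}\sum_i A_1^iA_2^i$ and $\nabla\widetilde{J}_{\mathcal{B}}=\frac{1}{B}\sum_i B_1^iB_2^i$, the paper uses $\E[\|B_1B_2-A_1A_2\|^2]\le 2\,\E[\|B_1\|^2]\,\E[\|B_2-A_2\|^2]+2\,\E[\|B_1-A_1\|^2]\,\E[\|A_2\|^2]$ with $\|B_1\|\le L_f$, $\|A_2\|\le L_g$, and $L'_f$-Lipschitzness of $\nabla f$; each difference is then concentrated by a \emph{single two-index} Bernstein bound over the double sum $\sum_j\sum_k$ (appendix Lemma~4, imported from Kohler--Lucchi and the LazyGCN lemmas), with effective sample size $|S_1|\cdot|S_2|$ --- which is exactly where the product $N_1^jN_2^i$ in the denominators originates.

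This points to the one concrete gap in your plan: if you literally peel one randomness level at a time by the law of total variance, the outer-neighbor-sample term concentrates only at rate $1/|\mathcal{N}_2^{\text{u}}(i)\cap\mathcal{C}|\approx 1/(c\widetilde{C}C_dN_2^i)$, because at that conditioning level the estimator is an average of only $N_2^i$-many (conditionally fixed) inner aggregates; conditioning cannot manufacture the extra $N_1^j$ factor you wrote into your second term. To recover the $N_1^jN_2^i$ product you must treat the double sum jointly, as the paper does. (Note the paper's joint treatment itself glosses over the dependence induced by shared $j$'s, inheriting this from LazyGCN's Lemmas~10--11 rather than resolving it, and its Lemma~3 statement carries a typo --- the proof's denominator $c\widetilde{C}C_dN_1^jN_2^i$ versus $B$ in the statement.) On the plus side, your proposal to control the random intersection sizes $|\mathcal{N}_\ell^{\text{u}}(\cdot)\cap\mathcal{C}|$ by a Chernoff bound on a high-probability event is \emph{more} careful than the paper, which simply posits the approximation $N_{\mathcal{C}_1}^{j}=c\widetilde{C}C_dN_1^j$ without any concentration argument for it.
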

\begin{proof}
The details on the proof of Theorem \ref{theorem1} is provided in Appendix \ref{app_proof}.
\end{proof}

\textbf{Variance of GNS}
We aim to derive the average variance of the embedding for the output nodes at each layer.
Before moving forward, we provide several useful definitions. Let $B$ denote the size of  the nodes in one layer. Consider the underlying embedding:
$
\mathbf{Z}=\mathbf{L H \Theta},
$
where $\mathbf{L}$ is the Laplacian matrix, $\mathbf{H}$ denotes the feature matrix and $\mathbf{\Theta}$ is the weight matrix, Let $\tilde{\mathbf{Z}}\in\mathbb{R}^{B\times d}$ with the dimension of feature $d$ denote the estimated embedding derived from the sample-based method. Denote $\mathbf{P}\in\mathbb{R}^{B\times |\mathcal{V}|}$ as the row selection matrix which samples the embedding from the whole embedding matrix. The variance can be represented as $\mathbb{E}[\|\tilde{\mathbf{Z}}-\mathbf{P} \mathbf{Z}\|_{F}]$. Denote $\mathbf{L}_{i, *}$ as the i-th row of matrix $\mathbf{L}, \mathbf{L}_{*, j}$ is the $\mathrm{j}$ -th column of matrix $\mathbf{L},$ and $\mathbf{L}_{i, j}$ is the element at the position $(i, j)$ of matrix $\mathbf{L}$.

For each node at each layer, its embedding is estimated based on its neighborhood nodes established from the cached set $\mathcal{C}$. Based on the Assumption 1 in \cite{zou2019layer}, we have
 \begin{align}
 	&\mathbb{E}\left[\|\tilde{\mathbf{Z}}-\mathbf{P Z}\|_{F}^{2}\right] \notag\\
 	=&  \sum_{i=1}^{|\mathcal{V}|}q_i\cdot  \mathbb{E}\left[\left\|\tilde{\mathbf{Z}}_{i, *}-\mathbf{Z}_{i, *}\right\|_{2}^{2}\right] \notag\\
 	=& \sum_{i=1}^{|\mathcal{V}|} {q_i\left\|\mathbf{L}_{i, *}\right\|_{0}}\left(\sum_{j=1}^{|\mathcal{V}|}p_{ij}\cdot s_j\left\|\mathbf{L}_{i, j} \mathbf{H}_{j, *} \mathbf{\Theta}\right\|_{2}^{2}-\left\|\mathbf{L}_{i, *} \mathbf{H} \mathbf{\Theta}\right\|_{F}^{2}\right) \notag\\
 	=& \sum_{i=1}^{|\mathcal{V}|}q_i\left\|\mathbf{L}_{i, *}\right\|_{0}\big(\sum_{j=1}^{|\mathcal{V}|}p_{ij}\cdot s_j\left\|\mathbf{L}_{i, j} \mathbf{H}_{j, *} \mathbf{\Theta}\right\|_{2}^{2}-\notag\\
 	&q_i\cdot p_{ij}\cdot s_j\|\mathbf{L} \mathbf{H} \mathbf{\Theta}\|_{F}^{2}\big)
 \end{align}
where $q_i$ is the probability of node $i$ being contained in the first layer via neighborhood sampling and $p_{ij}$ is the importance sampling coefficient related to node $i$ and $j$. Moreover, $s_j$ is the probability of node $i$ being in the cache set $\mathcal{C}$.

Under Assumption 1 and 2 in \cite{zou2019layer} such that $\left\|\mathbf{H}_{i, *} \mathbf{\Theta}\right\|_{2} \leq \gamma \text { for all } i \in[|\mathcal{V}|]$ and $\left\|\mathbf{L}_{i, *}\right\|_{0} \leq \frac{C}{|\mathcal{V}|} \sum_{i=1}^{|\mathcal{V}|}\left\|\mathbf{L}_{i, *}\right\|_{0}$ and the definition of the importance sampling coefficient, we arrive 
\begin{align}
	&\mathbb{E}\left[\|\tilde{\mathbf{Z}}-\mathbf{P} \mathbf{Z}\|_{F}^{2}\right]\notag\\
	 \leq  & \sum_{i=1}^{|\mathcal{V}|}q_i\left\|\mathbf{L}_{i, *}\right\|_{0} \sum_{j=1}^{|\mathcal{V}|}p_{ij}\cdot s_j\left\|\mathbf{L}_{i, j} \mathbf{H}_{j, *} \mathbf{\Theta}\right\|_{2}^{2}  \notag\\
	 \leq&  {C}  \sum_{i=1}^{|\mathcal{V}|} \sum_{j=1}^{|\mathcal{V}|} q_i\cdot p_{ij}\cdot s_j\left\|\mathbf{L}_{i, j} \mathbf{H}_{j, *} \mathbf{\Theta}\right\|_{2}^{2} \notag\\
	 \leq &\frac{CB_{\text{out}} C_d \gamma \|\mathbf{L}\|_{F}^{2}}{|\mathcal{V}|},
\end{align}
where $C_d $ denotes the average degree and $B_{\text{out}}$ is the size of nodes at the output layer.

\subsection{Summary and Discussion}
GNS shares many advantages of various sampling methods and is able to avoid their drawbacks.
Like node-wise neighbor sampling,
it samples neighbors on each node independently and, thus, can be implemented and parallelized efficiently.
Due to the cache, GNS tends to avoid the neighborhood explosion in multi-layer GNN. GNS maintains
a global and static distribution to sample the cache, which requires only one-time computation and can be easily
amortized during the training. In contrast, LADIES computes the sampling distribution for every layer
in every mini-batch, which makes the sampling procedure expensive. Even though GNS constructs
a mini-batch with more nodes than LADIES, forward and backward computation on a mini-batch is not
the major bottleneck in many GNN models for mixed CPU-GPU training.
Even though both GNS and LazyGCN deploy caching to accelerate computation in mixed CPU-GPU training,
they use cache very differently.
GNS uses cache to reduce the number of nodes in a mini-batch to reduce computation and data movement between CPU and GPUs.
It captures majority of connectivities of nodes in a graph. LazyGCN caches and reuses
the sampled graph structure and node data. This requires a large mega-batch size to achieve good accuracy,
which makes it difficult to scale to giant graphs. Because LazyGCN uses node-wise sampling
or layer-wise sampling to sample mini-batches, it suffers from the problems inherent to
these two sampling algorithms. For example, as shown in the experiment section, LazyGCN cannot
construct a mega-batch with node-wise neighbor sampling on large graphs.

\begin{table*}[t]
\begin{threeparttable}
	\caption{Dataset statistics. \label{tab: exp-dataset}}
	\small
	\begin{tabular}{lrrccrcc}
		\toprule
		Dataset & \multicolumn{1}{c}{Nodes} & 
		          \multicolumn{1}{c}{Edges} & Avg. Deg& Feature & Classes & Multiclass & Train / Val / Test\\
		\midrule
		Yelp      &   716,847 &    6,977,410 & 10 & 300 & 100 & Yes & 0.75 / 0.10 / 0.15 \\
		Amazon    & 1,598,960 &  132,169,734 & 83 & 200 & 107 & Yes & 0.85 / 0.05 / 0.10 \\
		OAG-paper &   15,257,994& 220,126,508&  14 & 768 & 146 & Yes &0.43  / 0.05 / 0.05 \\
		OGBN-products &  2,449,029& 123,718,280& 51& 100 & 47 & No &0.10  / 0.02 / 0.88 \\
		OGBN-Papers100M& 111,059,956 & 3,231,371,744 & 30 & 128 & 172 & No &0.01 / 0.001 / 0.002\\
		\bottomrule
	\end{tabular}
\end{threeparttable}
\end{table*}

\section{ Experiments} 
\label{sec:simulation}

\begin{table*}[t]
\begin{threeparttable}
	\small
	\caption{Performance of different sampling approaches. \label{tab:simulate}}
	\begin{tabular}{llccccc}
		\toprule
		\multicolumn{1}{l}{Dataset(hidden layer dimension)}  & Metric & NS & LADIES (512)&  LADIES (5000) & LazyGCN & GNS\\ \midrule
		
		\multirow{2}{*}{Yelp(512)}       
		& F1-Score(\%)            &          62.54             &           59.32&     61.04            &     35.58        & 63.20   \\   
		
		& Time per epoch (s)          &      58.5                   &               62.1 &    237.9       &  1248.7           & 23.1   \\ \midrule
		
		\multirow{2}{*}{Amazon(512)}   		
		& F1-Score(\%)           &          76.69             &               76.46&     77.05       &  31.08             &76.13   \\   
		
		& Time per epoch (s)           &   89.5                    &               613.4  &   3234.2        &   3280.2          & 42.8     \\ \midrule
		
		\multirow{2}{*}{OAG-paper(256)}
		& F1-Score(\%)          &    50.23       &    43.51  &          46.72          &     N/A        &  49.23   \\   
		& Time per epoch (s)    &    3203.2      &    2108.0 &               7956.0       &            & 819.4  \\ \midrule

		\multirow{2}{*}{OGBN-products(256)}    		
		& F1-Score(\%)           &       78.44               &        70.32 &        75.36          &   69.78           &  78.01  \\   
		
		& Time per epoch (s)          &  25.6                   &       45.4 &        223.5              &   264.2         &  11.9    \\ \midrule
		
		\multirow{2}{*}{OGBN-Papers100M(256)}      		
		& F1-Score(\%)           &        63.61               &     57.94         & 59.23    & N/A      &  63.31   \\
		& Time per epoch (s)          &  462.2                     &   152.7      &  313.2    &       &  98.5    \\ 
		
		\bottomrule          
	\end{tabular}
    \footnotesize	
	The results were obtained by training a 3-layer GraphSage with hidden state dimension as 512 on Yelp and Amazon dataset, and 256 on the rest, using four methods. We update the model with a mini-batch size of 1000 and ADAM optimizer with a learning rate of 0.003 for all training methods. We use the efficient implementation of node-wise neighbor sampling, LADIES and GNS in DGL, parallelized with multiprocessing. The number of sampling workers is $4$. The column labeled ``LADIES(512)'' means sampling 512 nodes at each layer and ``LADIES(5000)'' means sampling 5000 nodes in each layer. In GNS, the size of cached was $1\%$ of $|\mathcal{V}|$. LazyGCN runs out of GPU memory on OAG-paper and OGBN-papers100M.
	%
	
\end{threeparttable}
\end{table*}

\subsection{Datasets and Setup}
We evaluate the effectiveness of GNS under inductive supervise setting on the following real-world large-scale datasets: Yelp \cite{zeng2019graphsaint}, and Amazon \cite{zeng2019graphsaint}, OAG-paper~\footnote{\url{https://s3.us-west-2.amazonaws.com/dgl-data/dataset/OAG/oag_max_paper.dgl} }, OGBN-products  \cite{hu2020open}, OGBN-papers100M \cite{hu2020open}.
OAG-paper is the paper citation graph in the medical domain extracted from the OAG graph\cite{oag}.
On each of the datasets, the task is to predict labels of the nodes in the graphs.
Table~\ref{tab: exp-dataset} provides various statistics for these datasets.

For each trial, we run the algorithm with ten epochs on Yelp, Amazon OGBN-products, OGBN-Papers100M dataset and each epoch proceeds for $\frac{ \text{\# train set} }{\text{batch size}}$ iterations.  For the OAG-paper dataset, we run the algorithm with three epochs.
We compare GNS with  node-wise neighbor sampling (used by GraphSage),  LADIES and LazyGCN for training
3-layer GraphSage. The detailed settings with respect to these four methods are summarized as follows:
\begin{itemize}
\item\textbf{GNS}: GNS is implemented by DGL \cite{wang2019dgl} and we apply GNS on all layers to
sample neighbors. The sampling fan-outs of each layer are 15, 10 for the third and second layer.
We sample nodes in the first layer
(input layer) only from the cache. The size of cached set is $1\%\cdot |\mathcal{V}|$.
\item\textbf{Node-wise neighbor sampling (NS)}~\footnote{\url{https://github.com/dmlc/dgl/tree/master/examples/pytorch/graphsage} } \cite{graphsage}:  NS is implemented by DGL \cite{wang2019dgl}. The sampling fan-outs of each layer are 15, 10 and 5. 
\item\textbf{LADIES}~\footnote{\url{https://github.com/BarclayII/dgl/tree/ladies/examples/pytorch/ladies} } \cite{zou2019layer}: LADIES is implemented by DGL \cite{wang2019dgl}. We sample 512 and 5000 nodes for LADIES per layer, respectively.
\item\textbf{LazyGCN}~\footnote{\url{https://github.com/MortezaRamezani/lazygcn}}  \cite{ramezani2020gcn}: we use
the implementation provided by the authors. We set the recycle period size as $R = 2$ and the recycling growth rate as $\rho = 1.1$. The sampler of LazyGCN is set as nodewise sampling with $15$ neighborhood nodes in each layer.
\end{itemize}
GNS, NS and LADIES are parallelized with multiprocessing. For all methods, we use the batch size of 1000.
We use two metrics to evaluate the effectiveness of sampling methods: micro F1-score to measure the accuracy and the average running time per epoch to measure the training speed.

We run all experiments on an AWS EC2 g4dn.16xlarge instance with 32 CPU cores, 256GB RAM and one NVIDIA T4 GPU.

\subsection{Experiment Results}
We evaluate test F1-score and average running time per epoch via using different methods in the case of large-scale dataset.
As is shown in Table \ref{tab:simulate}, GNS can obtain the comparable accuracy score compared to NS with $2\times-4\times$ speed in training time, using a small cache. The acceleration attributes to the smaller number of nodes in a mini-batch, especially a smaller number of nodes in the input layer (Table \ref{tab:num_nodes}). This significantly reduces the time of data copy between CPU and GPUs as well as reducing the computation overhead in a mini-batch (Figure \ref{fig:breakdown2}). In addition, a large number of input nodes have been cached in GPU, which further reduces the time used in data copy between CPU to GPU. GNS scales well to giant graphs with 100 million nodes as long as CPU memory of the machine can accomodate the graph. In contrast, LADIES cannot achieve the state-of-the-art model accuracy and its training speed is actually slower than NS on many graphs. Our experiments also show that LazyGCN cannot achieve good model accuracy with a small mini-batch size, which is not friendly to giant graphs. In addition,
The LazyGCN implementation provided by the authors fails to scale to giant graphs (e.g., OAG-paper and OGBN-products) due to the out-of-memory error even with a small mega-batch. 
Our method is robust to a small mini-batch size and can easily scale to giant graphs.

\begin{figure}
\centering
\includegraphics[width=\linewidth]{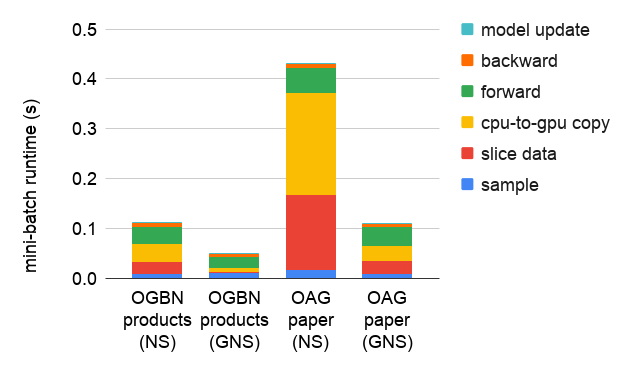}
\caption{Runtime breakdown (s) of each component in mini-batch training of NS and GNS on OGBN-products
and OAG-paper graphs.}
\label{fig:breakdown2}
\end{figure}

 \begin{table}[t]
\begin{threeparttable}
    \small
	\caption{The average number of input nodes in a mini-batch of NS and GNS as well as the average number of input nodes from the cache of GNS. \label{tab:num_nodes}}
	\begin{tabular}{lccccc}
		\toprule
		 & \#input nodes & \#input nodes & \#cached nodes \\
		 & (NS) & (GNS) & (GNS) \\
		\toprule
		Yelp & 151341 & 24150 & 5796 \\
		Amazon & 132288 & 19063 & 13986 \\
		OGBN-products       & 433928   & 88137   & 21552  \\ 
		OAG-paper   & 408854  & 102984  & 56422    \\ 
	    OGBN-Papers100M & 507405 & 155128 & 111923 \\
		\bottomrule
	\end{tabular}
\end{threeparttable}
\end{table}

We plot the convergence rate of all of the training methods on OGBN-products based on the test F1-scores (Figure~\ref{fig:f1}). In this study, LADIES samples 512 nodes per layer and GNS caches 1\% of nodes in the graph. The result indicates that GNS achieves similar convergence and accuracy as NS even with a small cache, thereby confirming the theoretical analysis in Section~\ref{analysis}, while LADIES and LazyGCN fail to converge to good model accuracy.
\begin{figure}[tb]
	\centering
	\includegraphics[width=0.8\columnwidth]{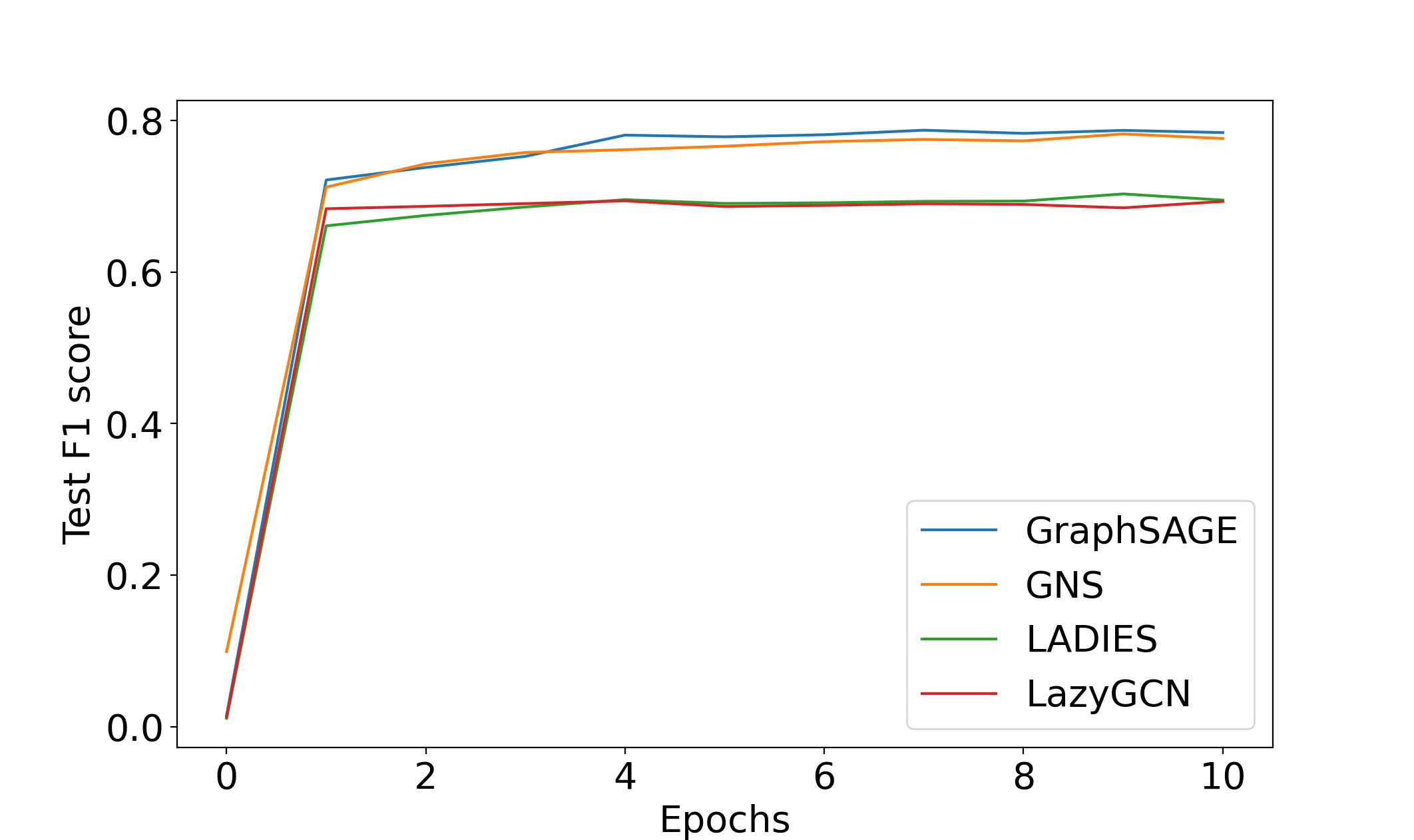}
	\caption{Comparison of the accuracy (F1 score) v.s. epochs.}
	\label{fig:f1}
\end{figure}

\begin{table}[t]
\begin{threeparttable}
    \small
	\caption{Percentage of isolated training nodes in LADIES. \label{tab:ladies}}
	\begin{tabular}{lccccc}
		\toprule
		\# of sampled nodes/layer       & 256   & 512   & 1000   & 5000 &  10000  \\
		\toprule
		\% of isolated target nodes   & 52.7  & 45.2  & 24.0   & 3.9  & 0    \\ 
	
		\bottomrule
	\end{tabular}
    \footnotesize
	Percentage of isolated nodes in the first layer when training three-layer GCN on OGBN-products with LADIES.
\end{threeparttable}
\end{table}

One of reasons why LADIES suffers poor performance is that it tends to construct a mini-batch with
many isolated nodes, especially for nodes in the first layer (Table \ref{tab:ladies}).
When training three-layer GCN
on OGBN-products with LADIES, the percentage of isolated nodes in the first layer under different numbers of layerwise sampled nodes is illustrated in Tables \ref{tab:ladies}. 
\begin{figure}[tb]
	\centering
	\includegraphics[width=0.8\columnwidth]{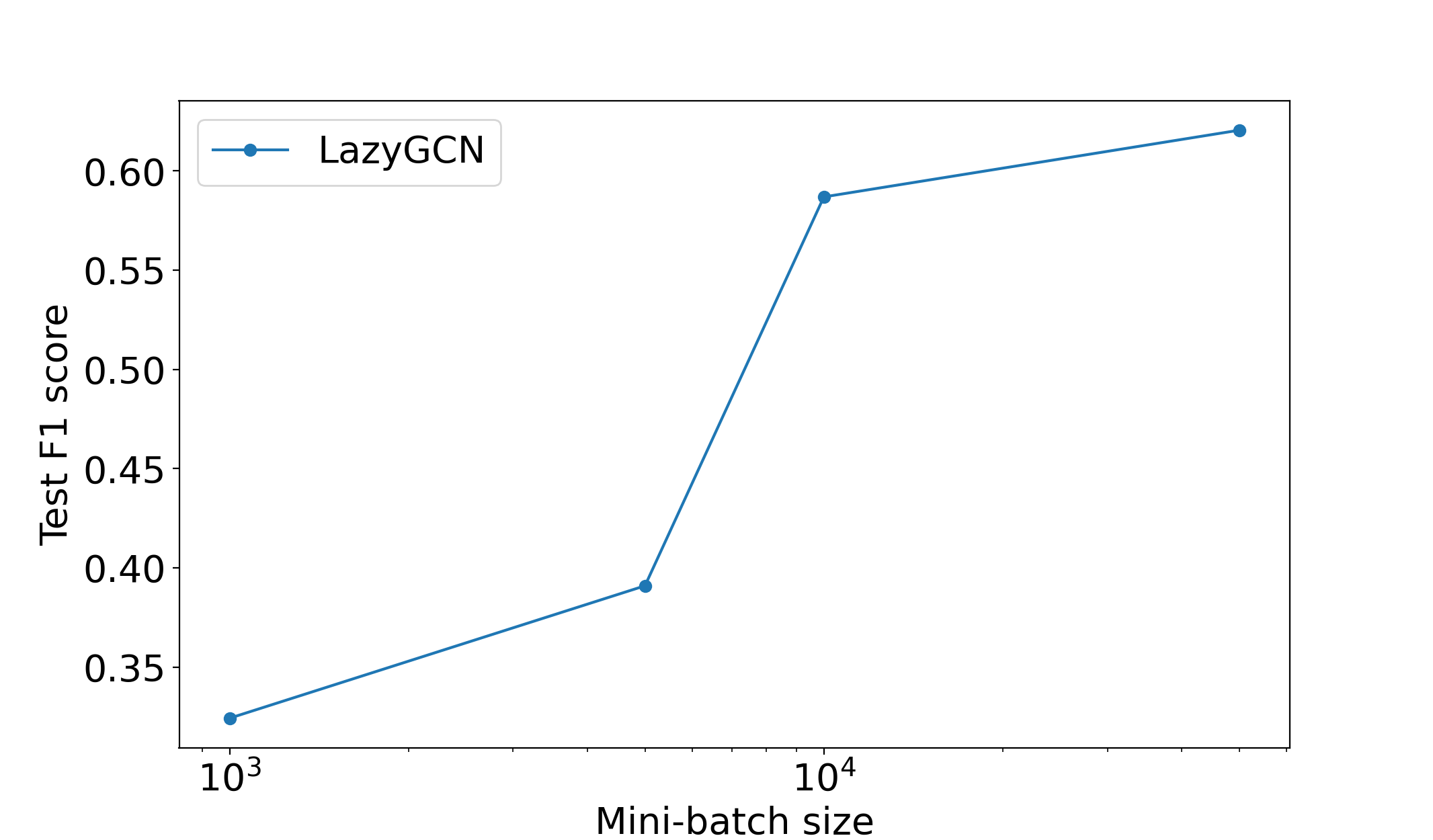}
	\caption{The effect of mini-batch size on the performance of LazyGCN on the Yelp dataset.}
	\label{fig:lazygcn}
\end{figure}

LazyGCN requires a large batch size to train GCN on large graphs, which usually leads to out of memory. Under the same setting of the paper \cite{ramezani2020gcn}, we investigate the performance of nodewise lazyGCN on Yelp dataset with different mini-batch sizes. As shown in Figure~\ref{fig:lazygcn}, LazyGCN performs poorly at the small mini-batch size. This may be caused by training with less representative graph data in mega-batch when recycling a small batch. 

\subsection{Hyperparameter study}
In this section, we explore the effect of various parameters in GNS on OGBN-products dataset. Table \ref{tab:ablation} summarizes the test F1-score with respect to different values of cache update period sizes $P$ and cache sizes. A cache size as small as $0.01\%$ can still achieve a fairly good accuracy. As long as the cache size is sufficiently large (e.g., $1\%\cdot |\mathcal{V}|$), properly reducing the frequency of updating the cache (e.g., $P=1,2,5$) does not affect performance. 
Note that it is better to get a $.1\%$ sample every epoch, than a single $1\%$ sample every 10 epochs, and $.01\%$ sample every epoch that the $0.1\%$ sample every 10 epochs.

\begin{table}[t]
\begin{threeparttable}
    \small
	\caption{GNS sensitivity to update period and cache size. \label{tab:ablation}}
	\begin{tabular}{@{\hspace{10pt}}l@{\hspace{15pt}}c@{\hspace{15pt}}c@{\hspace{15pt}}c@{\hspace{15pt}}c@{\hspace{10pt}}}
		\toprule
		& \multicolumn{4}{c}{cache update period size $P$} \\ 
		Size of cache               & $P=1$      & $P=2$      & $P=5$      & $P=10$     \\ 
		\midrule
		$|\mathcal{V}|\times 1\%$    & 78.34      & 78.40      & 78.17      & 77.54      \\ 
		$|\mathcal{V}|\times .1\%$   & 78.04     & 77.31      & 76.16      & 74.71      \\ 
		$|\mathcal{V}|\times .01\%$  & 76.29     & 72.83     & 71.60      & 71.21      \\ 
		\bottomrule
	\end{tabular}
    \footnotesize
	Performance in terms of test-set F1-score for different cache sizes and update periods. 
\end{threeparttable}
\end{table}

\section{Conclusions}
In this paper, we propose a new effective sampling framework to accelerate GNN mini-batch training on giant graphs by removing the main bottleneck in mixed CPU-GPU training. GNS creates a global cache to facilitate neighbor sampling and periodically updates the cache. Therefore, it reduces data movement between CPU and GPU. We empirically demonstrate the advantages of the proposed algorithm in convergence rate, computational time, and scalability to giant graphs. Our proposed method has a significant speedup in training on large-scale dataset. We also theoretically analyze GNS and show that even with a small cache size, it enjoys a comparable convergence rate as the node-wise sampling method.

\bibliographystyle{ACM-Reference-Format}
\bibliography{Reference}  
\newpage

\appendix

\numberwithin{equation}{section}

\section{Proof of Theorem 1}\label{app_proof}
By extending Lemma 4 in \cite{ramezani2020gcn}, we conclude that $\mathbb{E}[\|\nabla J(\hat{\boldsymbol{\theta}})\|^{2}]$ depends on $\frac{1}{T}\sum_{t =1}^{T}\E[\normn{\nabla \widetilde{J}_{\mathcal{B}}(\boldsymbol{\theta}_t) - \nabla J(\boldsymbol{\theta}_t)}{}^2]$.
To complete the proof of theorem \ref{theorem1}, we first present few vital lemmas. To be specific, Lemma \ref{1} characterizes the bounds on $\frac{1}{T}\sum_{t =1}^{T}$ $\E[\normn{\nabla \widetilde{J}_{\mathcal{B}}(\boldsymbol{\theta}_t) - \nabla J(\boldsymbol{\theta}_t)}{}^2]$.
\begin{lemma}\label{1}
Denote  $N^i_{\ell}$ as the number of the neighborhood nodes with respect to $i\in\mathcal{V}$ sampling uniformly at random at $\ell$-th layer.  The cached nodes in the set $\mathcal{C}$ with the size of $ |\mathcal{C}|$ are sampled without replacement according to $p^{\text{cache}}_{v}$. The dimension of node feature is denoted as $n$ and the size of  min-batch is denoted as $B$. Define  $\widetilde{C} = |\mathcal{C}|/|\mathcal{V}|$ and  $C_d = \sum_{v_i\in\mathcal{V} } \mathrm{deg}(v_i)/|\mathcal{V}|$ with the constant $c>0$. Under Assumption \ref{ass:lp}, the expected mean-square error of stochastic gradient $\nabla \widetilde{J}_{\mathcal{B}}(\boldsymbol{\theta})$ derived from Algorithm \ref{alg:full} to the full gradient  is bounded by
\begin{align}\label{mse}
&\mathrm{MSE} := \frac{1}{T}\sum_{t =1}^{T}\E\left[\norm{\nabla \widetilde{J}_{\mathcal{B}}(\boldsymbol{\theta}_t) - \nabla J(\boldsymbol{\theta}_t)}{}^2\right]\notag\\
\leq&\mathcal{O}\left( L^{'2}_{f}  \frac{\log (4 n / \delta)+1 / 2}{B}\right)+\mathcal{O}\left( L_f^ {'2} L_g^ {4}  \frac{\log (4 n / \delta)+1 / 2}{c\widetilde{C} C_dN_1^jN_2^i} \right)\notag\\
&+\mathcal{O}\left(   L_g^ {'2}  L_f^ {2}   \frac{\log (4 n / \delta) }{c\widetilde{C} C_dN_1^jN_2^i} \right).
\end{align}

\end{lemma}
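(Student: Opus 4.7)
The plan is to decompose the mean-square error into three independent contributions and then bound each using standard concentration arguments adapted to the cache-based sampling scheme. Specifically, I would write
\begin{align*}
\nabla \widetilde{J}_{\mathcal{B}}(\boldsymbol{\theta}) - \nabla J(\boldsymbol{\theta})
&= \bigl(\nabla \widetilde{J}_{\mathcal{B}}(\boldsymbol{\theta}) - \nabla J_{\mathcal{B}}(\boldsymbol{\theta})\bigr) \\
&\quad + \bigl(\nabla J_{\mathcal{B}}(\boldsymbol{\theta}) - \nabla J(\boldsymbol{\theta})\bigr),
\end{align*}
where the second term is the classical mini-batch sampling error and the first term captures the error introduced by restricting neighbor sampling to the cache. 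Since the importance-sampling coefficients $p^{(\ell-1)}_{u'}$ are designed so that the in-cache neighbor aggregations are unbiased estimates of the full-neighbor aggregations (conditional on the cache), taking expectations carefully will reduce both terms to centered sums amenable to vector concentration.

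For the mini-batch term, I would apply a vector Hoeffding/Bernstein inequality on the i.i.d.\ sum over $\mathcal{V}_{\mathcal{B}}$, using that $\nabla f_i$ is bounded in norm by a constant depending on $L_f'$ through Assumption~\ref{ass:lp} (the chain rule applied to the outer-inner composition $f_i \circ g$ yields a factor $L_f'$ once the inner layers are evaluated at their means). This directly gives the $\mathcal{O}\!\bigl(L_f'^{2} (\log(4n/\delta)+1/2)/B\bigr)$ term. For the neighbor-sampling term, I would further split by layer: conditional on the cache $\mathcal{C}$ and the target node, the estimator of $\nabla g_{jk}(\boldsymbol{\theta})$ averaged over $\mathcal{N}_1^{\mathrm{u}}(j)\cap\mathcal{C}$ and then over $\mathcal{N}_2^{\mathrm{u}}(i)\cap\mathcal{C}$ is an unbiased estimator whose variance scales inversely with the effective sample size. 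A standard calculation shows $\mathbb{E}|\mathcal{N}_1^{\mathrm{u}}(j)\cap\mathcal{C}| \gtrsim c\,\widetilde{C}\,C_d\,N_1^j$ (and similarly for layer~2), since cached nodes are drawn from $\mathcal{V}$ with an average density $\widetilde{C}$ and each node has average degree $C_d$; a Chernoff bound on this intersection size handles the ``sufficient cached neighbors'' event with the residual probability $\delta$.

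Combining these two pieces with the chain rule produces two cross-terms: one governed by $L_f'^{2} L_g^{4}$ (from differentiating the outer function and squaring the Lipschitz bound on $g$ twice as the error propagates through two aggregation layers) and one governed by $L_g'^{2} L_f^{2}$ (from the Lipschitz bound on $\nabla g$ interacting with the bounded outer function). Averaging over $t=1,\dots,T$ and invoking a union bound over the $n$ coordinates of the gradient introduces the $\log(4n/\delta)$ factors in each of the three terms. Putting these bounds together yields exactly the three-term expression in~(\ref{mse}).

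The main obstacle, in my view, is establishing unbiasedness of the in-cache aggregator with respect to the full-neighborhood aggregator under the proposed importance weights $p^{(\ell-1)}_{u'}$, since the two layers of cache-restricted sampling are not independent: the event $j \in \mathcal{N}_2^{\mathrm{u}}(i)\cap\mathcal{C}$ and the size $|\mathcal{N}_1^{\mathrm{u}}(j)\cap\mathcal{C}|$ share the randomness of $\mathcal{C}$. My plan is to condition on $\mathcal{C}$ throughout the inner concentration argument, use the fact that neighbor sampling within each layer is uniform given $\mathcal{C}$, and only at the end take the outer expectation over $\mathcal{C}$ (which is where the factor $\widetilde{C} C_d$ enters the effective-sample-size denominators). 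The remaining steps are routine algebraic manipulation to match the exact constants in the stated bound.
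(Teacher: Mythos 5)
Your proposal follows essentially the same route as the paper's proof: the identical two-term decomposition of $\nabla \widetilde{J}_{\mathcal{B}}-\nabla J$ into the cache-induced error $\nabla \widetilde{J}_{\mathcal{B}}-\nabla J_{\mathcal{B}}$ and the mini-batch error $\nabla J_{\mathcal{B}}-\nabla J$ (the paper's Lemmas~\ref{2} and~\ref{3}, combined via $\|\boldsymbol{a}+\boldsymbol{b}\|^2\leq 2\|\boldsymbol{a}\|^2+2\|\boldsymbol{b}\|^2$), Bernstein-type vector concentration for each piece, a product-rule split of the cache term into the $L_f'^{2}L_g^{4}$ and $L_g'^{2}L_f^{2}$ contributions, and the effective cache sample size $c\widetilde{C}C_d N_1^j$ in the denominators. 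Your extra care---conditioning on $\mathcal{C}$ to establish unbiasedness and a Chernoff bound on $|\mathcal{N}_1^{\mathrm{u}}(j)\cap\mathcal{C}|$---merely makes rigorous what the paper asserts as the approximation $N_{\mathcal{C}_1}^{j}=c\widetilde{C}C_dN_1^j$, so the argument is the same in substance.
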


\begin{proof}
According to the inequality, $\|\boldsymbol{a}+\boldsymbol{b}\|^{2} \leq 2\|\boldsymbol{a}\|^{2}+2\|\boldsymbol{b}\|^{2}$, $\mathrm{MSE}$ (\ref{mse}) can be decomposed into two parts:
\begin{align}\label{mse1}
&\mathrm{MSE} := \frac{1}{T}\sum_{t =1}^{T}\E\left[\norm{\nabla \widetilde{J}_{\mathcal{B}}(\boldsymbol{\theta}_t) - \nabla J(\boldsymbol{\theta}_t)}{}^2\right]\notag\\
\leq&  \frac{2}{T} \sum_{t=1}^{T}\mathbb{E}\left[\left\|\nabla \widetilde{J}_{\mathcal{B}}\left(\boldsymbol{\theta}_{t}\right)-\nabla J_{\mathcal{B}}\left(\boldsymbol{\theta}_{t}\right)\right\|^{2}\right]\notag\\
&+\frac{2}{T} \sum_{t=1}^{T} {\mathbb{E}\left[\left\|\nabla J_{\mathcal{B}}\left(\boldsymbol{\theta}_{t}\right)-\nabla J\left(\boldsymbol{\theta}_{t}\right)\right\|^{2}\right]}
\end{align}
Two terms in (\ref{mse1}) are bounded by Lemma \ref{2} and \ref{3}, respectively.
\end{proof}

\begin{lemma}\label{2}
 Based on the notations in Lemma \ref{1}, with probability exceeding $1-\delta$ we have
\begin{align}
&\E[\normn{\nabla \widetilde{J}_{\mathcal{B}}(\boldsymbol{\theta}) - \nabla J_{\mathcal{B}}(\boldsymbol{\theta})}{}^2]\notag\\
\leq& 128  L_f^ {'2} L_g^ {4}   \frac{\log (4 n / \delta)+1 / 2}{c\widetilde{C} C_dN_1^jN_2^i}+ 64  L_g^ {'2}  L_f^ {2}\frac{\log (4 n / \delta) }{c\widetilde{C} C_dN_1^jN_2^i}.
\end{align}
\end{lemma}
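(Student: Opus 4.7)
The plan is to reduce the convergence bound to a mean-square-error (MSE) bound on the stochastic gradient and then control the MSE in two stages. First, I would invoke the standard nonconvex SGD convergence lemma (specifically, Lemma~4 of Ramezani et al.~\cite{ramezani2020gcn}, which the authors flag as the starting point): under Assumption~\ref{ass:lp}, $J$ has a Lipschitz-continuous gradient with some constant $L_J$ determined by $L_f,L_g,L_f',L_g'$, so running SGD with a properly tuned step size gives
\begin{align}
\mathbb{E}\left[\|\nabla J(\hat{\boldsymbol{\theta}})\|^{2}\right] \leq \mathcal{O}\!\left(\sqrt{\tfrac{1}{T}\sum_{t=1}^{T}\mathbb{E}\|\nabla\widetilde{J}_{\mathcal{B}}(\boldsymbol{\theta}_t)-\nabla J(\boldsymbol{\theta}_t)\|^{2}/\,t}\right).
\end{align}
This reduces Theorem~\ref{theorem1} to proving the MSE bound displayed in Lemma~\ref{1}, which is the real technical content.

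For the MSE bound I would split the error into a mini-batch error and a neighbor-sampling error via the basic inequality $\|a+b\|^2\le 2\|a\|^2+2\|b\|^2$:
\begin{align}
\mathbb{E}\|\nabla\widetilde{J}_{\mathcal{B}}-\nabla J\|^2 \;\le\; 2\,\mathbb{E}\|\nabla\widetilde{J}_{\mathcal{B}}-\nabla J_{\mathcal{B}}\|^2+2\,\mathbb{E}\|\nabla J_{\mathcal{B}}-\nabla J\|^2,
\end{align}
exactly as the Lemma~\ref{1} proof sketch already does. The second term is the classical mini-batch variance: writing $\nabla J$ as a mean of $|\mathcal{V}|$ per-node gradients, Assumption~\ref{ass:lp} yields a uniform $\ell_2$ bound of order $L_f'$ on each summand, and a vector Bernstein inequality for sampling without replacement gives a tail bound of order $L_f'^2(\log(4n/\delta)+1/2)/B$ with probability at least $1-\delta/2$. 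This is exactly the first term in the claimed MSE bound, so the main remaining burden is the cache/neighborhood term.

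The core difficulty, which I expect to be the main obstacle, is bounding $\mathbb{E}\|\nabla\widetilde{J}_{\mathcal{B}}-\nabla J_{\mathcal{B}}\|^2$, i.e., proving Lemma~\ref{2}. For a fixed mini-batch, the error comes from the two nested neighborhood aggregations being replaced by their cache-restricted, importance-reweighted counterparts. I would differentiate the composite $f\circ g$ via the chain rule and then repeatedly apply $\|ab-\tilde a\tilde b\|\le\|a-\tilde a\|\,\|b\|+\|\tilde a\|\,\|b-\tilde b\|$ to separate two sources of error: (i) the discrepancy between $\nabla f$ (resp.\ $\nabla g$) evaluated at the true inner aggregation and at the sampled one, which Assumption~\ref{ass:lp} lets me bound by $L_f'$ (resp.\ $L_g'$) times the deviation of the aggregation itself; and (ii) the deviation of each sampled aggregation from its full counterpart. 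The unbiasedness of the rescaled estimator follows from the definition of $p_k^{(1)}$, and the effective probability that a given neighbor survives the cache intersection is, up to constants, $\widetilde C\,C_d/N_\ell^i$, since the cache is a size-$\widetilde C|\mathcal V|$ biased sample and degree statistics concentrate around $C_d$. Applying a vector Bernstein inequality to each nested aggregation then produces deviations of order $\sqrt{\log(4n/\delta)/(c\widetilde C C_d N_1^j N_2^i)}$ at each layer; squaring and combining with the Lipschitz prefactors $L_f^{\,'2}L_g^4$ and $L_g^{\,'2}L_f^2$ recovers the two cache-dependent terms in Lemma~\ref{1}.

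Finally, I would take a union bound over the mini-batch step $t=1,\dots,T$ (absorbing the extra $\log T$ factor into the $\log(n/\delta)$ already present) and average over iterations. Plugging the resulting MSE bound back into the SGD convergence inequality from the first paragraph yields the statement of Theorem~\ref{theorem1}. The one subtlety I would flag for careful bookkeeping is ensuring that the importance weights $p^{(\ell-1)}_{u'}$ defined in Section~\ref{sec: IS} indeed produce an unbiased (or asymptotically unbiased) estimator for the cache-restricted aggregation; any residual bias must be shown to be absorbed into the $\widetilde C C_d$ scaling so that it does not spoil the MSE bound.
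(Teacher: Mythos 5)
Your proposal matches the paper's own proof of Lemma~\ref{2} in all essentials: the paper likewise uses the chain-rule product structure $\nabla J_{\mathcal{B}}=\frac{1}{B}\sum_i A_1^iA_2^i$ and $\nabla\widetilde{J}_{\mathcal{B}}=\frac{1}{B}\sum_i B_1^iB_2^i$, splits $\E[\normn{B_1^iB_2^i-A_1^iA_2^i}{}^2]$ into a $\|B_1^i\|\cdot\|B_2^i-A_2^i\|$ piece and a $\|B_1^i-A_1^i\|\cdot\|A_2^i\|$ piece, converts the aggregation deviations through the Lipschitz constants $L_f, L_g, L_f', L_g'$ exactly as you describe, and controls each nested aggregation with a vector Bernstein bound (its Lemma~\ref{lemma:pre}), relying on unbiasedness of the $p_k^{(1)}$-weighted estimator and the approximation $N_{\mathcal{C}_1}^{j}\approx c\widetilde{C}C_dN_1^j$ for the cache-restricted neighborhood size. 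The only nit is your phrasing of the cache-survival probability as $\widetilde{C}C_d/N_\ell^i$ rather than (up to the constant $c$) $\widetilde{C}C_d$ per sampled neighbor, but your resulting effective sample size $c\widetilde{C}C_dN_1^jN_2^i$ and the final two cache-dependent terms agree with the paper's.
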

\begin{proof}
The proof of Lemma \ref{2} is provided in Appendix \ref{proof l2}.
\end{proof}

\begin{lemma}\label{3}
	Based on the notations in Lemma \ref{1}, with probability exceeding $1-\delta$ we have
	\begin{align}
	\E[\normn{\nabla  {J}_{\mathcal{B}}(\boldsymbol{\theta}) - \nabla J(\boldsymbol{\theta})}{}^2]
	\leq 128    L_f^ {'2}  \frac{\log (4 n / \delta)+1 / 2}{B}.
	\end{align}
\end{lemma}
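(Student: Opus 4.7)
The plan is to exploit that $J$ and $J_{\mathcal{B}}$ use identical full-neighborhood aggregations inside — they differ only in the outer average over nodes — so the only source of randomness in the difference $\nabla J_{\mathcal{B}}(\boldsymbol{\theta}) - \nabla J(\boldsymbol{\theta})$ is the mini-batch draw $\mathcal{V}_{\mathcal{B}}$. Writing $F_i(\boldsymbol{\theta})$ for the per-node loss at $i$ evaluated on its \emph{exact} neighborhood, we have
\begin{align}
\nabla J_{\mathcal{B}}(\boldsymbol{\theta}) - \nabla J(\boldsymbol{\theta}) = \frac{1}{B}\sum_{i \in \mathcal{V}_{\mathcal{B}}}\bigl(\nabla F_i(\boldsymbol{\theta}) - \nabla J(\boldsymbol{\theta})\bigr),
\end{align}
which is an average of $B$ i.i.d. zero-mean random vectors in $\mathbb{R}^n$, since $\mathcal{V}_{\mathcal{B}}$ is drawn uniformly at random from the training set and $\nabla J(\boldsymbol{\theta}) = \mathbb{E}_{i}[\nabla F_i(\boldsymbol{\theta})]$.

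First I would bound the magnitude of each centered summand. Applying the chain rule, $\nabla F_i$ is a composition of $\nabla f_i$ with the inner gradients $\nabla g$ evaluated at the full aggregates over $\mathcal{N}(i)$ and $\mathcal{N}(j)$; using Assumption \ref{ass:lp} this gives a uniform bound of the form $\|\nabla F_i(\boldsymbol{\theta}) - \nabla J(\boldsymbol{\theta})\| \leq c \, L_f'$ for a small absolute constant $c$. This centering step (symmetrizing $\nabla F_i$ against its mean rather than against $0$) is where the extra $1/2$ inside $\log(4n/\delta) + 1/2$ will appear in the final bound.

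Next I would apply a vector concentration inequality to the i.i.d. sum. The cleanest path matching the target constants is to apply a scalar Bernstein/Hoeffding bound to each of the $n$ coordinates of the sum and then take a union bound over coordinates, producing a high-probability bound on $\|\nabla J_{\mathcal{B}}(\boldsymbol{\theta}) - \nabla J(\boldsymbol{\theta})\|^2$ of order $L_f'^2 (\log(4n/\delta) + 1/2)/B$; the $1/B$ rate comes from the square of the $1/B$ prefactor combined with the $B$ independent summands. Tracking constants through the coordinate-wise tail bound and its conversion to a squared-norm bound should yield the explicit factor $128$.

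The main obstacle will be the constant bookkeeping — in particular verifying that $L_f'$ (rather than, say, $L_f L_g$) is the right Lipschitz constant in front, which amounts to checking that the authors' convention measures the per-node gradient deviation by the smoothness of $f$ and that the inner $\nabla g$ contributions cancel because both $\nabla F_i$ and $\nabla J$ use identical full neighborhoods. Once the per-sample bound and the form of the vector concentration inequality are fixed, the rest of the argument is a routine application of i.i.d. averaging and a union bound over $n$ coordinates.
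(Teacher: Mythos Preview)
Your proposal is correct and follows essentially the same approach as the paper: both recognize that $J_{\mathcal{B}}$ and $J$ share identical inner aggregations $A_1^i A_2^i$ (your $\nabla F_i$), so the difference is purely the empirical-versus-full average over target nodes, and both conclude via a Bernstein-type vector concentration bound yielding the $\log(4n/\delta)+1/2$ numerator and the $1/B$ rate. The only presentational difference is that the paper does not spell out the coordinate-wise union bound but simply invokes its auxiliary Lemma~\ref{lemma:pre} (the pre-packaged Bernstein bound) with the batch sum playing the role of the sampled set; your version unpacks that step explicitly, and your caution about whether $L_f'$ is really the correct constant in front is well placed, since the paper asserts it without justification.
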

\begin{proof}
Besides the definition of $\nabla J_{\mathcal{B}}(\boldsymbol{\theta})$ (\ref{batch}), $\nabla J(\boldsymbol{\theta})$ is given as
\begin{align}
\nabla J(\boldsymbol{\theta})=\frac{1}{|\mathcal{V}|} \sum_{i \in \mathcal{V}} A_{1}^iA_{2}^i,
\end{align}
where $ A_{1}^i, A_{2}^i$ are represented by (\ref{a1}) and (\ref{a2}), respectively.

For simplicity,we denote
\[\mathbb{E}_{\mathcal{V}_{\mathcal{B}} \sim \mathcal{V}}\big[\mathbb{E}_{j \sim \mathcal{N}(i), \forall i \in \mathcal{V}_{\mathcal{B}}}[\mathbb{E}_{k \sim \mathcal{N}
	(j), \forall j\in \mathcal{N}(i)} [\cdot]]\big] \]
 as $\E[\cdot]$.
\begin{align}
&\E[\normn{\nabla  {J}_{\mathcal{B}}(\boldsymbol{\theta}) - \nabla J(\boldsymbol{\theta})}{}^2]\\
\leq &\E\left[\norm{\frac{1}{B} \sum_{i \in \mathcal{V}_{\mathcal{B}}} A_{1}^iA_{2}^i - \frac{1}{|\mathcal{V}|} \sum_{i \in \mathcal{V}} A_{1}^iA_{2}^i }{}^2\right],\\
\leq& 128   L_f^ {'2}  \frac{\log (4 n / \delta)+1 / 2}{c\widetilde{C} C_dN_1^jN_2^i},
\end{align}
where the last inequality comes from Lemma \ref{lemma:pre}.
\end{proof}

\section{Proof of Lemma \ref{2} }\label{proof l2}
To bound $\E[\normn{\nabla \widetilde{J}_{\mathcal{B}}(\boldsymbol{\theta}) - \nabla J_{\mathcal{B}}(\boldsymbol{\theta})}{}^2]$, we begin with the definition of $\nabla \widetilde{J}_{\mathcal{B}}(\boldsymbol{\theta})$ and $\nabla J_{\mathcal{B}}(\boldsymbol{\theta}){}$, which is given by
\begin{align}\label{batch}
\nabla J_{\mathcal{B}}(\boldsymbol{\theta})=\frac{1}{B} \sum_{i \in \mathcal{V}_{\mathcal{B}}} A_{1}^iA_{2}^i,
\end{align}
where
\begin{align}
A_{1}^i& = \nabla f_{i}\left(\frac{1}{|\mathcal{N}(i)|} \sum_{j \in \mathcal{N}(i)} \frac{1}{|\mathcal{N}(j)|}\sum_{k \in \mathcal{N}(j)}g_{jk}(\boldsymbol{\theta})\right), \label{a1}\\
A_{2}^i& = \frac{1}{|\mathcal{N}(i)|} \sum_{j \in \mathcal{N}(i)} \frac{1}{|\mathcal{N}(j)|}\sum_{k \in \mathcal{N}(j)}\nabla g_{jk}(\boldsymbol{\theta}).\label{a2}
\end{align}
\begin{align}
\nabla \widetilde{J}_{\mathcal{B}}(\boldsymbol{\theta}) = \frac{1}{B} \sum_{i \in \mathcal{V}_{\mathcal{B}}}  B_{1}^iB_{2}^i
\end{align}
where
\begin{align}
B_{1}^i& = \nabla f_i\left( \frac{1}{|\mathcal{N}_2^{\mathcal{C}}(i)|} \sum_{j \in \mathcal{N}_2^{\mathcal{C}}(i)} \frac{1}{|\mathcal{N}_1
	^{\mathcal{C}}(j)|}\sum_{k \in \mathcal{N}_1
	^{\mathcal{C}}(j)}p_k^{(1)} g_{jk}(\boldsymbol{\theta})\right), \\
B_{2}^i& = \frac{1}{|\mathcal{N}_2^{\mathcal{C}}(i)|} \sum_{j \in \mathcal{N}_2^{\mathcal{C}}(i)} \frac{1}{|\mathcal{N}_1
	^{\mathcal{C}}(j)|}\sum_{k \in \mathcal{N}_1  
	^{\mathcal{C}}(j)}p_k^{(1)}\nabla  g_{jk}(\boldsymbol{\theta}).
\end{align}
For simplicity,we denote
\[\mathbb{E}_{\mathcal{V}_{\mathcal{B}} \sim \mathcal{V}}\left[\mathbb{E}_{j \sim \mathcal{N}(i), \forall i \in \mathcal{V}_{\mathcal{B}}}[\mathbb{E}_{k \sim \mathcal{N}
	(j), \forall j\in \mathcal{N}(i)}[\cdot]]\right]\]
as $\E[\cdot]$.

Based on the inequalities $\left\|\frac{1}{n} \sum_{i=1}^{n} \boldsymbol{a}_{i}\right\| \leq \frac{1}{n} \sum_{i=1}^{n}\left\|\boldsymbol{a}_{i}\right\|$, $\|\boldsymbol{a}+\boldsymbol{b}\|^{2} \leq$
$2\|\boldsymbol{a}\|+2\|\boldsymbol{b}\|,$  and $\| \boldsymbol{ab}\| \leq\|\boldsymbol{a}\|\|\boldsymbol{b}\| $, we arrive
\begin{align}
&\E[\normn{\nabla \widetilde{J}_{\mathcal{B}}(\boldsymbol{\theta}) - \nabla J_{\mathcal{B}}(\boldsymbol{\theta})}{}^2]\notag\\
=&2 \mathbb{E}\left[\left\|B_{1}^{i}\right\|^{2}\right] \mathbb{E}\left[\left\|B_{2}^{i}-A_{2}^{i}\right\|^{2}\right]\notag\\
&+2 \mathbb{E}\left[\left\|B_{1}^{i}-A_{1}^{i}\right\|^{2}\right] \mathbb{E}\left[\left\|A_{2}^{i}\right\|^{2}\right].\label{ej}
\end{align}
We shall bound two terms in (\ref{ej}).
\begin{enumerate}
\item The first term: for $\mathbb{E}\left[\left\|B_{1}^{i}\right\|^{2}\right]$, we have 
\begin{align}
\mathbb{E}\left[\left\|B_{1}^i\right\|^{2}\right]\leq L_{f}^{2}.\label{t0}
\end{align}
 In terms of $\mathbb{E}\left[\left\|B_{2}^{i}-A_{2}^{i}\right\|^{2}\right]$, we have
\begin{align}
&\mathbb{E}\left[\left\|B_{2}^{i}-A_{2}^{i}\right\|^{2}\right]\notag\\
 = & L_f^ {'2} \mathbb{E}\Bigg[ \bigg\| \sum_{j \in \mathcal{N}_2^{\mathcal{C}}(i)} \sum_{k \in \mathcal{N}_1
 	^{\mathcal{C}}(j)} \frac{p_k^{(1)}}{N_{\mathcal{C}_2}^{i} N_{\mathcal{C}_1}^{j} } \nabla g_{jk}(\boldsymbol{\theta}) \notag\\ -
 &\sum_{j \in \mathcal{N}(i)}\sum_{k \in \mathcal{N}(j)} \frac{1}{(N^i)^2} \nabla g_{jk}(\boldsymbol{\theta})
 \bigg\|^2\Bigg]\notag\\
 \leq& 64  L_g^ {'2}  \frac{\log (4 n / \delta) }{c\widetilde{C} C_dN_1^jN_2^i}\label{t4},
\end{align}
where the last inequality comes from Lemma \ref{lemma:pre}.
\item  The second term: for $\mathbb{E}\left[\left\|A_{2}^{i}\right\|^{2}\right]$, we have 
\begin{align}
\mathbb{E}\left[\left\|A_{2}^i\right\|^{2}\right]\leq L_{g}^{2}.\label{t3}
\end{align}
In terms of $\mathbb{E}\left[\left\|B_{1}^{i}-A_{1}^{i}\right\|^{2}\right]$, we have
\begin{align}
&\mathbb{E}\left[\left\|B_{1}^{i}-A_{1}^{i}\right\|^{2}\right]\notag\\
= & L_f^ {'2} \mathbb{E}\Bigg[ \bigg\| \sum_{j \in \mathcal{N}_2^{\mathcal{C}}(i)} \sum_{k \in \mathcal{N}_1
	^{\mathcal{C}}(j)} \frac{p_k^{(1)}}{N_{\mathcal{C}_2}^{i}   N_{\mathcal{C}_1}^{j} }  g_{jk}(\boldsymbol{\theta}) \notag\\ -
&\sum_{j \in \mathcal{N}(i)}\sum_{k \in \mathcal{N}(j)} \frac{1}{(N^i)^2}  g_{jk}(\boldsymbol{\theta})
\bigg\|^2\Bigg]\notag\\
\leq& 128  L_g^ {2}  L_f^ {'2}  \frac{\log (4 n / \delta)+1 / 2}{c\widetilde{C} C_dN_1^jN_2^i}\label{t1},
\end{align}
where the last inequality comes from Lemma \ref{lemma:pre}.
\end{enumerate}

By integrating inequalities (\ref{t0}), (\ref{t1}), (\ref{t3}), (\ref{t4}), it yields

\begin{align}
&\E[\normn{\nabla \widetilde{J}_{\mathcal{B}}(\boldsymbol{\theta}) - \nabla J_{\mathcal{B}}(\boldsymbol{\theta})}{}^2]\notag\\
 \leq& 128  L_f^ {'2} L_g^ {4}   \frac{\log (4 n / \delta)+1 / 2}{c\widetilde{C} C_dN_1^jN_2^i}+ 64  L_g^ {'2}  L_f^ {2}\frac{\log (4 n / \delta) }{c\widetilde{C} C_dN_1^jN_2^i}.
\end{align}

\begin{lemma}\label{lemma:pre}
Based on the notations in Lemma \ref{1}, with probability $1-\delta$ we have
 \begin{align}
 &\mathbb{E}\Bigg[ \bigg\| \sum_{j \in \mathcal{N}_2^{\mathcal{C}}(i)} \sum_{k \in \mathcal{N}_1
 	^{\mathcal{C}}(j)} \frac{p_k^{(1)}}{N_{\mathcal{C}_2}^{i}   N_{\mathcal{C}_1}^{j} }  g_{jk}(\boldsymbol{\theta}) \notag\\
 	&-\E_{j \sim \mathcal{N}(i),k \sim \mathcal{N}(j)}  [g_{jk}(\boldsymbol{\theta})]
 \bigg\|^2\Bigg]\notag\\
 \leq & 8 \sqrt{2} L_{g} \sqrt{\frac{\log (4 n / \delta)+1 / 2}{c\widetilde{C} C_dN_1^jN_2^i}}     \quad \forall ~i\in\mathcal{V}\\
 &\mathbb{E}\Bigg[ \bigg\| \sum_{j \in \mathcal{N}_2^{\mathcal{C}}(i)} \sum_{k \in \mathcal{N}_1
 	^{\mathcal{C}}(j)} \frac{p_k^{(1)}}{N_{\mathcal{C}_2}^{i}   N_{\mathcal{C}_1}^{j} }  \nabla g_{jk}(\boldsymbol{\theta})  \notag\\
 	&-\E_{j \sim \mathcal{N}(i),k \sim \mathcal{N}(j)}  [\nabla g_{jk}(\boldsymbol{\theta})]
 \bigg\|^2\Bigg]\notag\\
 \leq & 8   L'_{g} \sqrt{\frac{\log (4 n / \delta) }{c\widetilde{C} C_dN_1^jN_2^i}}      \quad \forall  ~i\in\mathcal{V},\label{eq2}
 \end{align}
 where $C_d = \sum_{v_i\in\mathcal{V} } \mathrm{deg}(v_i)/|\mathcal{V}|$ with the constant $c>0$.
\end{lemma}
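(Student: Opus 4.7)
The plan is to prove both inequalities in parallel via a vector-valued concentration argument, viewing the doubly-indexed cache-restricted weighted average as an (approximately) unbiased estimator of the target expectation and controlling its fluctuation in $\mathbb{R}^n$. Write
$X := \sum_{j \in \mathcal{N}_2^{\mathcal{C}}(i)} \sum_{k \in \mathcal{N}_1^{\mathcal{C}}(j)} \frac{p_k^{(1)}}{N_{\mathcal{C}_2}^{i} N_{\mathcal{C}_1}^{j}} g_{jk}(\boldsymbol{\theta})$
and let $\mu := \mathbb{E}_{j \sim \mathcal{N}(i), k \sim \mathcal{N}(j)}[g_{jk}(\boldsymbol{\theta})]$; the goal is a tail bound for $\|X - \mu\|_2$ that can then be converted to the expected-norm bound in the statement.

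The first step is to verify unbiasedness. Conditionally on the cache $\mathcal{C}$ and on the outer sampled set $\mathcal{N}_2^{\mathcal{C}}(i)$, the importance weight $p_k^{(1)} = p_k^{\mathcal{C}} \cdot k/\min\{k, \mathcal{N}_C(i)\}$ from Section~\ref{sec: IS} is designed precisely so that the inner weighted average reproduces the true neighborhood mean $\frac{1}{|\mathcal{N}(j)|}\sum_{k \in \mathcal{N}(j)} g_{jk}(\boldsymbol{\theta})$ in expectation. Iterating expectation through the outer neighbor draw and the cache draw then recovers $\mathbb{E}[X] = \mu$.

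Next I would bound the fluctuation by a vector Hoeffding (or Bernstein) inequality in $\mathbb{R}^n$. Conditionally on $\mathcal{C}$, the innermost draws are independent across $(j,k)$, and each summand has norm controlled by $L_g$ (together with a worst-case bound on the importance weight). This yields a tail of the form $\Pr[\|X-\mu\|_2 \geq t] \leq 2n \exp(-m t^2 / (C L_g^2))$, where the dimension factor $n$ in the prefactor produces the $\log(4n/\delta)$ term on inversion, and $m$ is the effective sample size. The combinatorial input is identifying $m$: under the cache probability $p_v^{\text{cache}} \propto \mathrm{deg}(v)$, the expected number of neighbors of any node that land in $\mathcal{C}$ is of order $\widetilde{C}\cdot \mathrm{deg}$, contributing an average $\widetilde{C} C_d$ across the graph, and compounding across the two neighbor-sampling layers yields $m = c\widetilde{C} C_d N_1^j N_2^i$. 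Integrating the tail via the standard identity $\mathbb{E}[\|X-\mu\|^2] = \int_0^\infty \Pr[\|X-\mu\|^2 \geq s]\,ds$ produces the square-root bound, with the additive $1/2$ inside the logarithm arising from the tail integral. The second inequality, for $\nabla g_{jk}$, is obtained by the same argument with $L_g$ replaced by $L'_g$, invoking Assumption~\ref{ass:lp} to bound $\|\nabla g_{jk}\|$.

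The main obstacle will be the without-replacement cache sampling together with control of the importance weights when $|\mathcal{N}_1^{\mathcal{C}}(j)|$ is small relative to its mean. A naive Hoeffding breaks if the weights are unbounded, so I expect the cleanest route is to show high-probability concentration of $|\mathcal{N}_1^{\mathcal{C}}(j)|$ around $\widetilde{C} \cdot |\mathcal{N}(j)|$ (so worst-case weights are a constant), or equivalently to apply a McDiarmid/bounded-differences inequality tailored to without-replacement samples, which tolerates the induced negative correlations. A secondary delicate point is the two-level composition of concentration: one applies Hoeffding conditionally on the cache and then takes a further expectation, absorbing the conditional failure probabilities into the universal constant $c$ in the denominator.
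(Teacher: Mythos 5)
Your proposal follows essentially the same route as the paper's proof: unbiasedness of the cache-weighted estimator via the importance coefficients $p_k^{(1)}$, a dimension-dependent vector Bernstein bound with sub-Gaussian tail and prefactor $4n$ (whence the $\log(4n/\delta)$ upon inverting at level $\delta$), and the identification of the effective sample size $c\widetilde{C} C_d N_1^j N_2^i$ by approximating the cache-restricted neighborhood size $N_{\mathcal{C}_1}^{j}\approx c\widetilde{C} C_d N_1^j$. Two minor notes: in the paper the additive $1/2$ inside the logarithm comes from the $e^{1/2}$ factor in the Bernstein prefactor rather than from a tail integral, and the without-replacement sampling and weight-boundedness issues you carefully flag are simply absorbed into the unspecified constant $c>0$ in the paper, so your treatment is, if anything, more rigorous on that point.
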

\begin{proof}
	The proof is derived from the proof of Lemma 6 in \cite{kohler2017sub} and Lemma 10 and 11 in \cite{ramezani2020gcn}. Based on the definition of $p_k^{(1)}$, for $i\in\mathcal{V}$, we have
	\begin{align}
	&\E\left[   \sum_{j \in \mathcal{N}_2^{\mathcal{C}}(i)} \sum_{k \in \mathcal{N}_1
		^{\mathcal{C}}(j)} \frac{p_k^{(1)}}{N_{\mathcal{C}_2}^{i}   N_{\mathcal{C}_1}^{j} }  g_{jk}(\boldsymbol{\theta})   \right]\notag\\ =& \E_{j \sim \mathcal{N}(i),k \sim \mathcal{N}(j)}  [g_{jk}(\boldsymbol{\theta})].
	\end{align}
Similarly, there is 
	\begin{align}
	&\E\left[   \sum_{j \in \mathcal{N}_2^{\mathcal{C}}(i)} \sum_{k \in \mathcal{N}_1
		^{\mathcal{C}}(j)} \frac{p_k^{(1)}}{N_{\mathcal{C}_2}^{i}   N_{\mathcal{C}_1}^{j} }  \nabla g_{jk}(\boldsymbol{\theta})   \right]\notag\\ =& \E_{j \sim \mathcal{N}(i),k \sim \mathcal{N}(j)}  [\nabla  g_{jk}(\boldsymbol{\theta})].
	\end{align}
	The value of $ N_{\mathcal{C}_1}^{j} $ and $ N_{\mathcal{C}_2}^{j} $ depend on the predefined number of neighborhood nodes sampled uniformly at random in each layer, i.e., $N_2^i, N_1^j$, the ratio of cached nodes to the whole graph nodes, i.e.,  $\widetilde{C} = |\mathcal{C}|/|\mathcal{V}|$ and the sampling probability $\mathcal{P}$. Thus, $ N_{\mathcal{C}_1}^{j} $ can be approximated by $N_{\mathcal{C}_1}^{j} =c\widetilde{C} C_dN_1^j$
	where $C_d = \sum_{v_i\in\mathcal{V} } \mathrm{deg}(v_i)/|\mathcal{V}|$ with the constant $c>0$. Based on the above, the proof can be completed by extending the proof of Lemma 10 and 11 in \cite{ramezani2020gcn}.
	
Given the sampled set $\mathcal{S}_1,\mathcal{S}_2$ and 
$$
V_{S}(\boldsymbol{\theta}) = \frac{1}{|S_1|\cdot|S_2|} \sum_{i\in \mathcal{S}_1}\sum_{j\in \mathcal{S}_2} V_{ij}(\boldsymbol{\theta}),
$$ where $ V_{ij}(\boldsymbol{\theta})\in\mathbb{R}^n$ is $L_v$-Lipschitz continuous for all $i,j$,
 the proof can be completed via Bernstein's bound with a sub-Gaussian tail, given by
	we have
\begin{align}
&\mathbb{P}\left(\left\|V_{S}(\boldsymbol{\theta})-\E [V_{S}(\boldsymbol{\theta})]\right\| \geq \epsilon\right)\notag\\
 \leq &4n \cdot \exp \left(-\frac{\epsilon^{2} |S_1|\cdot|S_2|}{64 L_v^{2}}+\frac{1}{2}\right),
\end{align}
where $\epsilon\leq 2L_v$.

	Finally, let $\delta$ as the upper bound of Bernstein inequality
	\begin{align}
	\delta=4 n \cdot \exp \left(-\frac{\epsilon^{2} |S_1|\cdot|S_2|}{64 L_v^{2}}+\frac{1}{2}\right).
	\end{align}
	Therefore, we have
	\begin{align}
	\epsilon=8 \sqrt{2} L_v \sqrt{\frac{\log (4n / \delta)+1 / 2}{|S_1|\cdot|S_2|}}.
	\end{align}
	The inequality (\ref{eq2}) can be clarified in similar way.
\end{proof}

\end{document}